\documentclass{article} % For LaTeX2e
\usepackage{iclr2025_conference,times}
\iclrfinalcopy

% if you need to pass options to natbib, use, e.g.:
%     \PassOptionsToPackage{numbers, compress}{natbib}
% before loading neurips_2023

% ready for submission
% \usepackage{neurips_2024}
%\usepackage[natbib]{iclr2025_conference}
\usepackage{iclr2025_conference,times}

\usepackage{hyperref}
\usepackage{url}

\usepackage[T1]{fontenc}    % use 8-bit T1 fonts
\usepackage{wrapfig} %for wrapping figures
\usepackage{thm-restate}
\usepackage{microtype}
\usepackage{subfigure}
\usepackage{booktabs} % for professional tables
\usepackage{nicematrix} % Tal: To be used in tables
\usepackage{algpseudocode} 
\usepackage{algorithm} 
\usepackage{xspace}
\usepackage{enumitem}

\usepackage{amssymb}% http://ctan.org/pkg/amssymb
\usepackage{pifont}% http://ctan.org/pkg/pifont
\usepackage{booktabs}
\usepackage{caption}

% hyperref makes hyperlinks in the resulting PDF.
% If your build breaks (sometimes temporarily if a hyperlink spans a page)
% please comment out the following usepackage line and replace
% \usepackage{icml2022} with \usepackage[nohyperref]{icml2022} above.

\usepackage[utf8]{inputenc} % allow utf-8 input
\usepackage[T1]{fontenc}    % use 8-bit T1 fonts      % hyperlinks
\usepackage{url}            % simple URL typesetting
\usepackage{booktabs}       % professional-quality tables
\usepackage{amsfonts}       % blackboard math symbols
\usepackage{nicefrac}       % compact symbols for 1/2, etc.
\usepackage{microtype}      % microtypography
\usepackage{xcolor}

% Attempt to make hyperref and algorithmic work together better:
% Use the following line for the initial blind version submitted for review:
%\usepackage{icml2022}

% If accepted, instead use the following line for the camera-ready submission:
\usepackage{booktabs} % For professional table lines
\usepackage{colortbl} % For cell background colors
\usepackage{xcolor} % For custom colors

% Define custom colors
\definecolor{gray}{gray}{0.9}
\definecolor{lightgray}{gray}{0.95}
% For theorems and such
\usepackage{amsmath}
\usepackage{amssymb}
\usepackage{mathtools}
\usepackage{amsthm}
\usepackage{float}
\usepackage{booktabs} % For professional table lines
\usepackage{colortbl} % For cell background colors
\usepackage{tabularray} % For tblr environment
\usepackage{xcolor} % For custom colors
\usepackage[normalem]{ulem} % For underline in table
% Define custom colors
\definecolor{gray}{gray}{0.9}
% if you use cleveref..
\usepackage[capitalize,noabbrev]{cleveref}
%\usepackage[pdftex]{graphicx}
%%%%%%%%%%%%%%%%%%%%%%%%%%%%%%%%
% THEOREMS
%%%%%%%%%%%%%%%%%%%%%%%%%%%%%%%%
\theoremstyle{plain}
\newtheorem{theorem}{Theorem}[section]

\newtheorem{corollary}[theorem]{Corollary}
\theoremstyle{definition}
\newtheorem{definition}[theorem]{Definition}
%%%%%%%%%
\newtheorem{lemma}[theorem]{Lemma}
\usepackage{mathtools}

\theoremstyle{remark}

\usepackage[textsize=tiny]{todonotes}
\usepackage{float}

\newcommand{\lms}{ \{\!\!\{ }
\newcommand{\rms}{ \}\!\!\} }
\newcommand{\sglobal}{s^{\mathrm{global}}}
\newcommand{\sglobalhat}{\hat{s}^{\mathrm{global}}}
\newcommand{\gennet}{\textsc{EGenNet}}
\newcommand{\us}{\gennet}

\newcommand{\RR}{\mathbb{R}}

\renewcommand{\O}{\mathcal{O}}
\newcommand{\G}{\mathcal{G}}
\newcommand{\N}{\mathcal{N}}

\usepackage{fontawesome5}
\usepackage{titletoc}
\usepackage[toc,page,header]{appendix}
\usepackage{caption}

\usepackage{tikz}
\usetikzlibrary{patterns}

\usepackage{graphicx}

% to compile a preprint version, e.g., for submission to arXiv, add add the
% [preprint] option:
%     \usepackage[preprint]{neurips_2023}

% to compile a camera-ready version, add the [final] option, e.g.:
%     \usepackage[final]{neurips_2023}

% to avoid loading the natbib package, add option nonatbib:
%    \usepackage[nonatbib]{neurips_2023}
\usepackage{amsmath,amsfonts,bm}
\usepackage{mathtools}
%\urlstyle{same}
\usepackage{xcolor}
\hypersetup{
   % colorlinks=true,
    linkcolor=blue,
    filecolor=magenta,      
    urlcolor=blue,
    pdftitle={Overleaf Example},
    pdfpagemode=FullScreen,
    }
\usepackage{booktabs} % For professional-looking table lines
\usepackage{colortbl} % For cell background colors
\usepackage{graphicx} % For \resizebox
\usepackage{xcolor}   % For color definitions
\usepackage{array}  
\usepackage{tabularray}
% Define a custom color for links
\usepackage{booktabs} % Professional table lines
\usepackage{colortbl} % Cell background colors
\usepackage{xcolor}   % For custom colors
\usepackage{graphicx} % For resizing tables
\usepackage{array}    % Improved alignment in tables

% Define custom colors
\definecolor{snsblue}{RGB}{31,119,180}
\definecolor{snsorange}{RGB}{255,127,14}
\definecolor{snsgreen}{RGB}{44,160,44}
\definecolor{snsred}{RGB}{214,39,40}
\usepackage{booktabs}
\usepackage{colortbl}
\usepackage{xcolor}
\usepackage{graphicx}
\usepackage{array}

% Define custom colors
\definecolor{snspurple}{RGB}{148,103,189}
% Mark sections of captions for referring to divisions of figures

% Highlight a newly defined term

% Figure reference, lower-case.

% Figure reference, capital. For start of sentence

% Section reference, lower-case.

% Section reference, capital.

% Reference to two sections.

% Reference to three sections.

% Reference to an equation, lower-case.
\def\eqref#1{equation~\ref{#1}}
% Reference to an equation, upper case

% A raw reference to an equation---avoid using if possible

% Reference to a chapter, lower-case.

% Reference to an equation, upper case.

% Reference to a range of chapters

% Reference to an algorithm, lower-case.

% Reference to an algorithm, upper case.

% Reference to a part, lower case

% Reference to a part, upper case

\def\1{\bm{1}}

%\newcommand{\test}{\mathcal{D_{\mathrm{test}}}}

% Random variables

% rm is already a command, just don't name any random variables m

% Random vectors

% Elements of random vectors

% Random matrices

% Elements of random matrices

%mathcal commands

\newcommand{\M}{\mathbb{M}}
\newcommand{\W}{\mathbb{W}}

\renewcommand{\H}{\mathcal{H}}

%mathbb commands

\newcommand{\bz}{\boldsymbol{z}}

\newcommand{\btheta}{\boldsymbol{\theta}}
\newcommand{\bth}{\btheta}

%these should be removed
%\newcommand{\Snd}{\S_{\leq n}(\RR^d)}
%\newcommand{\SnO}{\S_{\leq n}(\Omega)}
%\newcommand{\SnnO}{\S_n(\Omega)}
%\newcommand{\MnO}{\M_{\leq n}(\Omega)}

%\newcommand{\Snnd}{\S_n(\RR^d)}
%\newcommand{\SnnR}{\S_n(\RR)}

% Vectors

\def\vs{{\bm{s}}}
\def\vt{{\bm{t}}}

\def\vv{{\bm{v}}}

\def\vx{{\bm{x}}}

% Elements of vectors

% Matrix
\def\mA{{\bm{A}}}

\def\mE{{\bm{E}}}
\def\mF{{\bm{F}}}

\def\mH{{\bm{H}}}

\def\mP{{\bm{P}}}
\def\mQ{{\bm{Q}}}

\def\mV{{\bm{V}}}

\def\mX{{\bm{X}}}

% Tensor
\DeclareMathAlphabet{\mathsfit}{\encodingdefault}{\sfdefault}{m}{sl}
\SetMathAlphabet{\mathsfit}{bold}{\encodingdefault}{\sfdefault}{bx}{n}

% Graph

\def\gG{{\mathcal{G}}}

\def\gN{{\mathcal{N}}}

% Sets

% Don't use a set called E, because this would be the same as our symbol
% for expectation.

% Entries of a matrix

% entries of a tensor
% Same font as tensor, without \bm wrapper

% The true underlying data generating distribution

% The empirical distribution defined by the training set

% The model distribution

% Stochastic autoencoder distributions

 % Laplace distribution

% Wolfram Mathworld says $L^2$ is for function spaces and $\ell^2$ is for vectors
% But then they seem to use $L^2$ for vectors throughout the site, and so does
% wikipedia.

 % See usage in notation.tex. Chosen to match Daphne's book.

% Groups and group actions

\usepackage{soul}
\usepackage[utf8]{inputenc} % allow utf-8 input
\usepackage[T1]{fontenc}    % use 8-bit T1 fonts      % hyperlinks
\usepackage{multirow}
\usepackage{url}            % simple URL typesetting
\usepackage{booktabs}       % professional-quality tables
\usepackage{amsfonts}       % blackboard math symbols
\usepackage{nicefrac}       % compact symbols for 1/2, etc.
\usepackage{microtype}      % microtypography
\usepackage{xcolor}         % colors

\usepackage[english]{babel}

\usepackage{soul, color}
\newcommand{\Note}[1]{}
\renewcommand{\Note}[1]{#1}  % comment out this re-definition to suppress all Notes

% Alternative with highlighter color rather than text colour
%\renewcommand{\NoteSigned}[3]{{\sethlcolor{#2}\Note{\hl{#1: #3}}}}

% Specific author comments
  % use for notes from Pietro

% Inline todo: Flag something in the text to be done

% Preliminary: Flag preliminary statements or results

% Convenient other commands

% Colour commands

\newcommand{\gray}[1]{\textcolor{gray}{#1}}

% Variant of the TODO box for addressing reviewer comments

\usepackage[toc,page,header]{appendix}

\usepackage[T1]{fontenc}
\usepackage[utf8]{inputenc}
\usepackage{authblk}

\usepackage{tikz}
\usetikzlibrary{patterns}
\theoremstyle{remark}
\usepackage{xcolor,colortbl}
\usepackage{booktabs}
\usepackage{siunitx}
\usepackage{tabularray}
\UseTblrLibrary{booktabs,siunitx}
\usepackage{amsmath}
\usepackage{csquotes}
\usepackage{amssymb}

\newcommand{\Gd}{\mathbb{G}(d)}
\newcommand{\GdN}{\mathbb{G}_{N}(d)}
\newcommand{\GdNdistinct}{\mathbb{G}_{N}^{\mathrm{distinct}}(d)}
\newcommand{\GdNgeneric}{\mathbb{G}_{\mathrm{generic}}(d,N)}
\newcommand{\vvv}{{\vv}}

\newcommand{\finv}{f_{\mathrm{inv}}}
 
\usepackage{array}
\usepackage{booktabs}
\usepackage{tabularray} % Import common packages
\usepackage{authblk}
\usepackage{hyperref}
\usepackage{url}
\usepackage[symbol]{footmisc}
\definecolor{myColor}{RGB}{255, 0, 0} % Define a custom color

\title{On the Expressive Power of Sparse Geometric MPNNs}

\author{\raggedright
\textbf{Yonatan Sverdlov\textsuperscript{1}}, \textbf{Nadav Dym\textsuperscript{1,2}
}}

\begin{document}
\maketitle

\vspace{-2em} % Reduce space after title

%\noindent
\textsuperscript{1} Faculty of Mathematics \\
\textsuperscript{2} Faculty of Computer Science \\
Technion -- Israel Institute of Technology \\
\texttt{yonatans@campus.technion.ac.il} \\ \texttt{nadavdym@technion.ac.il} 

\iclrfinalcopy

\begin{abstract}
Motivated by applications in chemistry and other sciences, we study the expressive power of message-passing neural networks for geometric graphs, whose node features correspond to 3-dimensional positions. Recent work has shown that such models can separate \emph{generic} pairs of non-isomorphic geometric graphs, though they may fail to separate some rare and complicated instances. However, these results assume a fully connected graph, where each node possesses complete knowledge of all other nodes. In contrast, often, in application, every node only possesses knowledge of a small number of nearest neighbors.

This paper shows that generic pairs of non-isomorphic geometric graphs can be separated by message-passing networks with rotation equivariant features as long as the underlying graph is connected. When only invariant intermediate features are allowed, generic separation is guaranteed for generically globally rigid graphs. We introduce a simple architecture, $\us$, which achieves our theoretical guarantees and compares favorably with alternative architectures on synthetic and chemical benchmarks. Our code is available at
\href{https://github.com/yonatansverdlov/E-GenNet}{GitHub}.
\end{abstract}

\section{Introduction}
Geometric graphs are graphs whose nodes are a 'position' vector in $\RR^d$ (typically $d=3$) and whose symmetries include the permutation symmetries of combinatorial graphs and translation and rotation of the node positions. Geometric graphs arise naturally in learning applications for chemistry, physical dynamics, and computer vision as natural models for molecules, particle systems, and 3D point clouds. These applications motivated the introduction of many learning models for geometric graphs, which were often inspired by 'standard' \emph{graph neural networks} (GNNs) for combinatorial graphs \citep{egnn,han2024survey,gemnet}. Subsequently, several theoretical works aimed primarily at understanding the expressive power and limitations of GNNs for geometric graphs. 

GNNs for geometric graphs produce global graph features invariant to geometric graph symmetries. The expressive power of GNNs is typically assessed primarily by their ability to assign different global features to pairs of geometric graphs that are not geometrically isomorphic. 

Recent research on this problem has uncovered several interesting results, mainly assuming that the graphs are full (an edge connects each pair of nodes). Under the full graph assumptions, several models for geometric GNNs are \emph{complete}, that is, capable of separating \emph{all} pairs of non-isomorphic graphs.
However, this typically comes at a relatively high computational price. Examples of complete models include $(d-1)$-WL-based GNNs \citep{three_iterations,gramnet,disgnn}, GNNs based on sub-graph aggregation \citep{Invariant_models}, and message passing models based on arbitrarily high-dimensional irreducible representations \citep{TFN,gemnet,dym2020universality,finkelshtein2022simple}. 
This paper focuses on more efficient geometric GNNs: message-passing neural networks (MPNN) based on simple invariant and equivariant features. In \citep{gcnn,disgnn}, it was shown that, even under the full graph assumption, invariant MPNN-based networks are not complete. In contrast, \citet{Invariant_models} and \citet{gramnet} proved that these models are 'generically complete,' which means that, under the full graph assumption, they are capable of separating most geometric graphs and only fail on a Lebesgue measure zero set of corner cases, which are difficult to separate. 

In practical applications, the full graph assumption is often not fulfilled: to improve computational efficiency, geometric GNNs often operate on sparse graphs, where each node is only connected to a few nearest neighbors.
Accordingly, the first question we address in this paper is:

\textbf{Question 1.} For which graphs, other than the full graphs, are geometric message-passing neural networks generically complete?

Our answer to this question is inspired by the results of \cite{on_the_expressive_power}. They divided Message-passing-based networks for geometric graphs into \emph{Equivariant Geometric Graph Neural Networks} (E-GGNNs), which use rotation \emph{equivariant} hidden features, and \emph{Invariant Geometric Graph Neural Networks} (I-GGNNs), which use rotation \emph{invariant} hidden features, and showed that E-GGNNs have substantial advantages over I-GGNNs in terms of their expressive power. 
    
Our answer to Question 1 is also strongly related to this dichotomy. We prove in \autoref{thm:ggnn} that E-GGNNs are generically complete for a given graph if and only if the graph is connected. This is visualized in Figure \ref{fig:teaser}. E-GGNNs will not be able to differentiate between the pair of geometric graphs in Figure \ref{fig:teaser}(A) as the underlying graph is disconnected but will be able to differentiate between the pairs in (B) and (C). 
\begin{figure}[t]
    \centering
     \includegraphics[width=13.0cm]{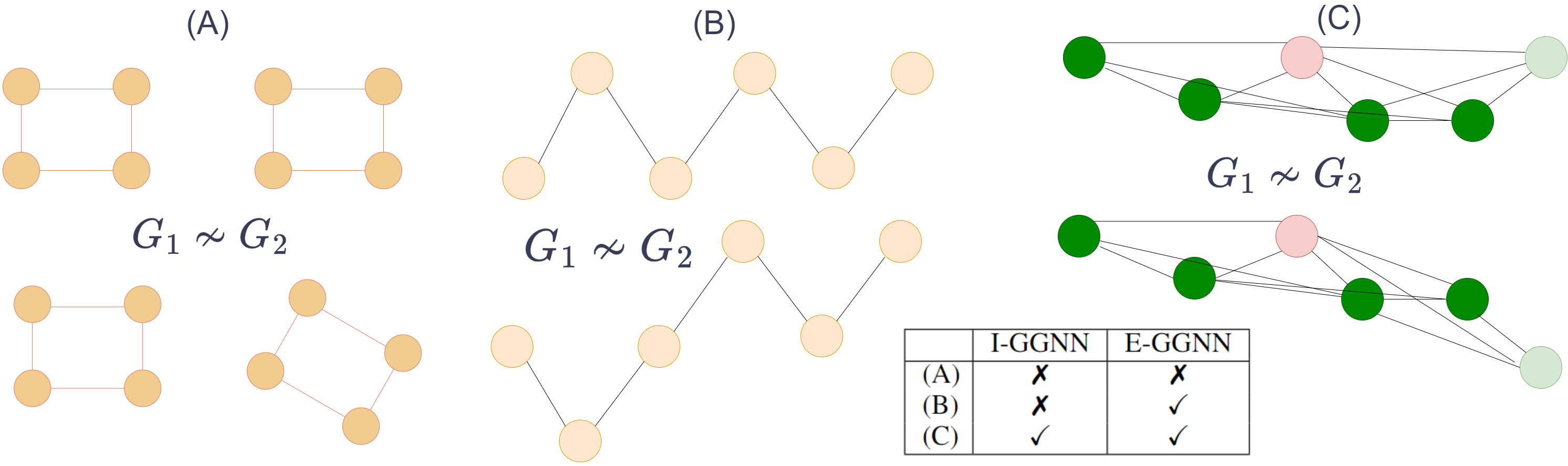}
    \caption{(A) and (B) are pairs of non-isomorphic geometric graphs with the same distances along edges. I-GGNN cannot distinguish such pairs, while E-GGNN can (generically) if the graph is connected, as in (B). Subplot (C) depicts an example of globally rigid graphs, where non-isomorphic geometric graphs do not share the same distances across edges (in the figure, the edge between pink and light green). Such examples can be separated by both I-GGNN and E-GGNN.}
    \centering
    \label{fig:teaser}
\end{figure}
In contrast, I-GGNNs will not be able to differentiate between the non-isomorphic pairs in Figure \ref{fig:teaser}(A)-(B), as the distances across all graph edges are the same. On the other hand, in (C), the distances along graph edges determine the positions of all nodes up to rotation and translation. In rigidity theory, such graphs are called 'generically globally rigid.' In \autoref{thm:iggnn}, we prove that I-GGNNs are generically complete if and only if the underlying graph is generically globally rigid. These two theorems give a complete answer to Question 1. 
% Here figs.

To guarantee generic separation, our theorems make the standard assumption \citep{on_the_expressive_power} that the I-GGNN and E-GGNN we consider are maximally expressive, meaning that all aggregation and readout functions used throughout the networks are injective. Recent work has shown that relatively standard and simple I-GGNN can fulfill this assumption \citep{Neural_Injective}. However, for E-GGNN, it is unclear how this assumption can be fulfilled. Accordingly, our second question is 

\textbf{Question 2.} Can we devise a simple E-GGNN architecture that is maximally expressive hence, generically complete on connected graphs?

We answer this question affirmatively by devising a simple architecture, which we name $\us$, and proving in \autoref{thm:maximally} that it is maximally expressive (for generic graphs). This architecture is based on a simple message-passing architecture that resembles EGNN \citep{egnn} but uses multiple equivariant channels as in \citep{levy2023using}.

We experimentally find that $\us$ is highly successful in synthetic separation tasks and on several learning tasks involving medium-sized molecules, with a performance comparable to, or better than, state-of-the-art methods. These results indicate that simple, generically complete methods like $\us$  are sufficient for the tasks we considered. However, we acknowledge that there may be other tasks, perhaps involving molecules with non-trivial symmetry groups, where more complex geometric graph neural networks that offer full separation could be beneficial. 
\paragraph{Contributions} To summarize, our main results are 
\begin{itemize}
    \item Showing generic separation of E-GGNNs depends on graph connectivity.
    \item Showing generic separation of I-GGNNs depends on graph rigidity.
    \item Proposing a simple E-GGNN architecture that is provably (generically) maximally expressive and performs well in practical tasks.
\end{itemize}

\paragraph{Appendix} See the appendix for all full proofs and related work.

\section{Setup: Geometric Graph Neural Networks and Rigidity Theory}
\textbf{Geometric Graphs. }
In our discussion of geometric graphs and GNNs, we loosely follow the definitions of \cite{on_the_expressive_power}. We define a geometric graph as a pair $\G = (\mA, \mX)$. The matrix $\mA \in \{0,1 \}^{n \times n}$ is an adjacency matrix, which fully defines a combinatorial graph. The neighborhood of a node $i$ is denoted by $\N_{i}$. Accordingly, we sometimes use statements like 'the graph $\mA$.' We assume for simplicity that the graph nodes are the integers $1,\ldots,n$. The matrix $\mX=(\vx_1,\ldots,\vx_n)\in \RR^{n \times d}$ denotes node positions.
 We denote the set of all geometric graphs with coordinates in $\mathbb{R}^{d}$ by $\Gd$, and the subset of graphs with $\leq N$ nodes by $\GdN$. In Appendix \ref{app:features}, we discuss incorporating additional node and edge features into our model and theoretical results.

The symmetry group of geometric graphs is the product of the permutation group $S_n$ and the group of rigid motions $\mathcal{E}(d)$. The action of a permutation matrix $\mP$ on a geometric graph is given by $\mP \gG := (\mP\mA\mP^\top,  \mP\mX)$. A rigid motion in $\mathcal{E}(d)$ is a rotation $\mQ \in \O(d)$ and a translation vector $\vt \in \RR^d$. They act on $\mX$ by applying the rotation and translation to all its coordinates, $\vx_i\mapsto \mQ \vx_i+\vt$. 

Two geometric graphs $\G$ and $\H$ are \emph{geometrically isomorphic} if there exists a transformation in $S_n \times \mathcal{E}(d) $  which can be applied to $\mathcal{G}$ to obtain $\mathcal{H}$. We will say that a function $F$ defined on $\Gd$ is \emph{invariant} if $F(\G)=F(\H) $ for all $\G$ and $\H$ which are geometrically isomorphic.

The expressive power of an invariant model $F$ is closely related to its ability to separate pairs $\G,\H$ that are not geometrically isomorphic: 
clearly, if $F$ cannot separate $\G$ from $\H$, then 
 $F$ will not be a good approximation for functions $f$ for which $f(\G)\neq f(\H)$. Conversely, if $F$ can separate all pairs of geometric graphs, then it can be used to approximate all continuous invariant \citep{gortler_and_dym,gramnet} and equivariant \citep{villar2021scalars,hordan2} functions. In Appendix \ref{sec:approx}, we discuss the connection between approximation and separation in the context of our results.
 
Constructing an \( F \) which can separate all geometric graphs is at least as challenging as distinguishing all combinatorial graphs (by setting all node features to zero). Without any assumptions on the point cloud, achieving this is infeasible. To address this, we employ the concept of generic geometric graphs which originates from rigidity theory.
\paragraph{Generic separation and Rigidity Theory}
As discussed in the introduction, our paper focuses on generic separation. The results we present in this section are primarily influenced by rigidity theory, so we also use this field's common notion of 'generic'.
We will say that $\mX\in \RR^{d\times n}$ is \emph{generic} if it is not the root of any multivariate polynomial $p:\RR^{d\times n} \to \RR$ with rational coefficients. The genericity assumption guarantees that $\mX$ avoids many degeneracies, making separation easier. For example, for generic $\mX$ all pairs of points have different norms, and every $d$ tuple of points $\vx_{i_1},\ldots,\vx_{i_d}$ is full rank because $\|\vx_i\|^2-\|\vx_j\|^2 $ and $\det[\vx_{i_1},\ldots,\vx_{i_d}] $ are polynomials with natural coefficients. 
We note that Lebesgue almost all point clouds in $\RR^{d\times n}$  are generic \citep{mityagin2020zero}. 

A globally rigid graph is a central topic in rigidity theory:
\begin{definition}[Globally rigid]( \cite{reconstructible}) A geometric graph $\mathcal{G} = (\mA,\mX)$ is \emph{globally rigid} in $\RR^d$, if the only $\mX' \in \mathbb{R}^{n \times d}$ which satisfy that $\| \vx_i-\vx_j\|=\|\vx_i'-\vx_j'\|$ for all edges $(i,j)$ in $\mA$, are those that are related to $\mX$ by a rigid motion.\\
A combinatorial graph $\mA$ is \emph{generically globally rigid} in $\RR^d$, if the geometric graph $\mathcal{G} = (\mA,\mX)$ is \emph{globally rigid} in $\RR^d$ for every generic $\mX\in \mathbb{R}^{n \times d}$.
\end{definition}

The geometric graphs in \autoref{fig:teaser} (A)-(B) are not globally rigid, as they have the same distances along edges but are not related by a rigid motion. The underlying combinatorial graph, disconnected in (A) or line graph in (B), is not generically globally rigid. 
In contrast, the underlying combinatorial graph in (C) is generically globally rigid. Another simple example of a generically globally rigid graph is the full graph. Additional non-trivial examples can be found in \citep{power-graph,power-graph2}.
A necessary condition for generic global rigidity of a graph with $n \geq d + 2$ nodes in $\mathbb{R}^{d}$ is to be $d+1$ connected \footnotemark, which explains why the graphs in (A)-(B) are not generically globally rigid.
\footnotetext{We say a graph $G$ is $k$ connected if the graph remains connected after removing any $k$ vertices.}

Note that global rigidity definitions focus on reconstructing $\mX$ when $\mA$ is known. In the study of GGNNs, we are interested in identifying both $\mX$ and $\mA$:

\begin{definition}
Let $d$ be a natural number. Following \cite{Invariant_models}, we will say that an invariant function $F$ defined on $\Gd$, \emph{identifies} a geometric graph $\G$, if for every $\hat \G \in \Gd$, we have that $F(\G)=F(\hat{\G})$ if and only if $\G$ and $\hat \G$ are geometrically isomorphic.\\
We say that $F$ \emph{generically identifies} a \emph{combinatorial graph} $\mA$ in $\mathbb{R}^{d}$, if $F$ can identify $\G=(\mA,\mX)$ for every generic $\mX\in \mathbb{R}^{n \times d}$. \\
We say that $F$ \emph{generically fails to identify} a \emph{combinatorial graph} $\mA$ in $\mathbb{R}^{d}$, if $F$ does not identify $\G=(\mA,\mX)$ for every generic $\mX \in \mathbb{R}^{n \times d}$.
\end{definition}
Note that the fact that $F$ does not generically identify $\mA$ only guarantees that there exists a generic $\mX$ so that $F$ fails to identify $(\mA,\mX)$. The statement ‘$F$ generically fails to identify' is, apriori, significantly stronger. Also note that invariant $F$ separates all non-isomorphic graphs (is \emph{complete}) if and only if it identifies all graphs.
As discussed in the introduction, our goal will be to classify which graphs are generically identifiable by standard message-passing-based networks for geometric graphs, which can be classified as either I-GGNN or E-GGNN. We will now define these concepts.

\paragraph{E-GGNN}
In the context of combinatorial graphs (with no geometric information), \citet{gilmer2017neural,jogl2023expressivitypreserving} and  \citet{velivckovic2022message} showed that many famous graph neural networks are instantiations of MPNNs. The goal of \cite{on_the_expressive_power} was to show, analogously, that many neural networks for \emph{geometric} graphs follow a generalized MPNN framework. Loosely following \cite{on_the_expressive_power}, we define \emph{Equivariant Geometric Graph Neural Networks (E-GGNN)} to be functions defined by a sequence of layers that propagates vector features from iteration $t$ to $t+1$ via a learnable function $f_{(t)}$ (at initialization we set $v_j^{(0)}=0$).
\begin{equation}
\label{eq:GGNN} 
\vv_i^{(t+1)}=f_{(t)} \left( \lms  \vv_j^{(t)}, \vx_i-\vx_j , \mid j \in \mathcal{N}_i \rms \right).
\end{equation}
To ensure the equivariance of the construction, the function $f_{(t)}$ is required to be rotation equivariant. Translation invariance is implicitly guaranteed by the translation invariance of $\vx_{i}-\vx_j$, and the requirement that $f_{(t)}$ is defined on multi-sets implicitly enforces permutation equivariance. We will often refer to $f_{(t)}$ as an \emph{aggregation function}.

We note that for the sake of simplicity, we don't require a \emph{combine} step that incorporates $\vv_i^{(t)}$ in the process of computing $\vv_i^{(t+1)} $ since we find that this step has little influence on our theoretical and empirical results.
We also note that the vector $\vv_i^{(t)}$ is not necessarily $d$ dimensional, and the action of $O(d)$ on $\vv_i^{(t)}$ is allowed to change from layer to layer. For example, in the $\us$ architecture we will introduce later on, each  $\vv_i^{(t)}$ will be a $d\times m$ matrix, and the action of $O(d)$ will be multiplication on each of the $m$ coordinates.

After $T$ iterations of E-GGNNs, a permutation, rotation, and translation \emph{invariant} feature vector can be obtained via a two-step process involving a rotation-invariant function $\finv$ and a multi-set \textsc{ReadOut} function:
\begin{equation} 
    \vs_i =\finv(\vvv_i^{(T)}), \quad  
    \sglobal = \textsc{ReadOut}\lms\vs_{1},..,\vs_{n}\rms
    \label{eq:readout}
\end{equation}

As shown by \cite{on_the_expressive_power}, many popular geometric graph neural networks can be seen as instances of E-GGNNs with a specific choice of functions $f_{(t)}$ and readout functions. Examples include EGNN \citep{egnn}, TFN \citep{TFN}, GVP \citep{gvp-egnn}, and MACE \citep{mace}.
In Section \ref{architecture}, we present $\us$, our instantiation of this framework. 

\paragraph{I-GGNNs}
An important subset of E-GGNNs are \emph{invariant} GGNNs (I-GGNNs). These networks only maintain scalar features $\vs_i^{(t)}$ and replace $\vx_i-\vx_j$ with its norm to obtain
\begin{equation}\label{eq:IGGNN} \vs_i^{(t+1)}=f_{(t)} \left( \lms  \vs_j^{(t)}, \|\vx_i-\vx_j\| , \mid j \in \mathcal{N}_i \rms \right)
 \end{equation}
A global invariant feature is obtained via a permutation invariant readout function as in \eqref{eq:readout}. 
An advantage of I-GGNNs is that there are no rotation equivariance constraints on $f_{(t)}$. Therefore, any standard message-passing neural network that can process continuous edge weights can be employed as an I-GGNN by setting the edge weights to $\|{\vx}_{ij}\|$. Examples of I-GGNNs in the literature include a variant of EGNN used for invariant tasks \citep{egnn}, SchNet \citep{SchNet}, DimeNet \citep{DimeNet} and SphereNet \citep{SphereNet}\footnotemark.
Our definition of I-GGNN follows the definition from \citep{gcnn,disgnn}. 
In Appendix \ref{I-GGNN types}, we explain that the definition of I-GGNN \cite{on_the_expressive_power} has a seemingly minor difference that renders it significantly stronger.
\footnotetext{\label{invariant models} DimeNet \citep{DimeNet} and SphereNet \citep{SphereNet} have access to the power graph $\G^2$ (defined later in the paper) and thus are invariant models on $\G^{2}$ and have potential expressive power stronger than invariant I-GGNN on $\G$.}

\paragraph{Maximally expressive E-GGNN and I-GGNNs}
The separation power of E-GGNN and I-GGNN architectures depends on the specific instance used. For example, if we choose the final readout function as the zero function, all geometric graphs will be assigned the same value. To address this, it is customary to assume that all functions $f_{(t)}$, as well as $\finv$ and the readout function, are injective, up to the ambiguities these functions have by construction. Following \cite{on_the_expressive_power}, we will call E-GGNN (respectively I-GGNN) architectures, which are composed of injective functions, maximally expressive E-GGNN (respectively I-GGNN) architectures.
We note that the existence of the injective functions necessary for maximal expressivity follows from set-theoretic considerations, similar to those used by \cite{wang2024rethinking}. The question of whether practical differentiable aggregation functions are injective is discussed in Section \ref{sec:maximally}.

\section{Expressive power of I-GGNN}
\label{sec:inv}
Our next goal is to analyze the separation abilities of I-GGNN.
\begin{restatable}{theorem}{IGGNN}
    [expressive power of I-GGNN]
\label{thm:iggnn}
Let $d$ be a natural number. Let $F$ be an I-GGNN.  Let $\mA$ be a graph not generically globally rigid on $\RR^d$. Then, $F$ generically fails to identify $\mA$. \\ Conversely, if $\mA$ is generically globally rigid on $\RR^d$ and $F$ is a maximally expressive I-GGNN with depth $T=1$, then $F$ generically identifies $\mA$. 
\end{restatable}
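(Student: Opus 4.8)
The plan is to handle the two directions around a single observation: a depth-$T$ I-GGNN is just ordinary message passing on the combinatorial graph $\mA$ with edge weights $\|\vx_i-\vx_j\|$, so $F(\mA,\mX)$ is a function of the edge-length-weighted graph only, while genericity of $\mX$ lets one pass back and forth between that combinatorial datum and the geometry.

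For the necessity direction, suppose $\mA$ is not generically globally rigid. If $\mA$ is disconnected, non-identifiability is elementary --- rigidly displacing one connected component preserves all edge lengths, hence $F$, but for a generic displacement is not a geometric isomorphism --- so I would assume $\mA$ connected; then $\mA$ has a vertex of degree $\ge 2$ unless $\mA\in\{K_1,K_2\}$, both of which are generically globally rigid and hence excluded. I would invoke the theorem of Gortler--Healy--Thurston that generic global rigidity in $\RR^d$ depends only on the graph, so that for \emph{every} generic $\mX$ the framework $(\mA,\mX)$ is not globally rigid, giving an $\mX'$ with $\|\vx'_i-\vx'_j\|=\|\vx_i-\vx_j\|$ on all edges but not a rigid motion of $\mX$. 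Then $F(\mA,\mX)=F(\mA,\mX')$, and $(\mA,\mX)\not\cong(\mA,\mX')$: an isomorphism would be a permutation $\phi\in\mathrm{Aut}(\mA)$ composed with a rigid motion, and equality of the edge lengths of $\mX$ and $\mX'$ would force (distinctness of the pairwise distances of the generic $\mX$) that $\phi$ fixes every edge of $\mA$ setwise, hence $\phi=\mathrm{id}$ by connectivity and the existence of a degree-$\ge 2$ vertex, making $\mX'$ a rigid motion of $\mX$ --- a contradiction. So $F$ fails to identify $(\mA,\mX)$ at every generic $\mX$.

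For the sufficiency direction, let $\mA$ be generically globally rigid and $F$ a maximally expressive depth-$1$ I-GGNN. Unrolling one layer from $\vs^{(0)}\equiv 0$, injectivity of the aggregation and read-out maps shows that $F(\G)$ injectively encodes, and is encoded by,
\[
D(\G)\ :=\ \lms\ \lms\,\|\vx_i-\vx_j\| : j\in\mathcal N_i\,\rms\ :\ i\in[n]\ \rms,
\]
so $F(\G)=F(\hat\G)\iff D(\G)=D(\hat\G)$. The core step is a reconstruction lemma: for generic $\mX$, $D(\mA,\mX)$ determines the edge-length-weighted graph of $(\mA,\mX)$ up to isomorphism, and moreover every $\hat\G=(\hat\mA,\hat\mX)\in\Gd$ with $D(\hat\G)=D(\mA,\mX)$ has $\hat n=n$ and the same edge-length-weighted graph up to isomorphism. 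For generic $\mX$ all pairwise distances are distinct, so in the flattened multiset of $D(\mA,\mX)$ each edge length of $\mA$ appears exactly twice --- once in the constituent multiset of each of its two endpoints and nowhere else --- and joining two constituents whenever they share a value, labelled by that value, therefore rebuilds $(\mA,(\|\vx_i-\vx_j\|)_{ij})$ up to relabelling. The same multiplicity-two shape is inherited by any $\hat\G$ realizing the same $D$, which forces the edges of $\hat\mA$ to have pairwise distinct lengths with no repeats inside a constituent, so the identical extraction recovers $(\hat\mA,(\|\hat\vx_i-\hat\vx_j\|)_{ij})$ up to relabelling, and the two reconstructions agree. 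Hence there is a graph isomorphism $\phi$ with $\hat\mA=\phi(\mA)$ and $\|\hat\vx_{\phi(i)}-\hat\vx_{\phi(j)}\|=\|\vx_i-\vx_j\|$ on all edges; setting $\vx'_i:=\hat\vx_{\phi(i)}$, the framework $(\mA,\mX')$ matches the edge lengths of $(\mA,\mX)$, which is globally rigid since $\mA$ is generically globally rigid and $\mX$ is generic, so $\mX'$ is a rigid motion of $\mX$ and, together with $\phi$, yields $\hat\G\cong(\mA,\mX)$. With invariance of $F$ this gives $F(\G)=F(\hat\G)\iff\G\cong\hat\G$. The small cases $n\le 2$ are checked by hand.

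The step I expect to be the main obstacle is the reconstruction lemma, and in particular making it work against an \emph{arbitrary} competitor $\hat\G$: one cannot assume $\hat\mX$ generic, so the argument must be run entirely off the rigid combinatorial shape (``each value twice, in its two endpoint constituents'') that $D(\mA,\mX)$ inherits from genericity of $\mX$, rather than off any property of $\hat\mX$ itself. The necessity direction's only delicate point, by contrast, is the bookkeeping that rules out nontrivial edge-preserving combinatorial automorphisms.
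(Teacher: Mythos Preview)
Your proof is correct and follows essentially the same route as the paper: invoke Gortler--Healy--Thurston for necessity, and for sufficiency reconstruct the edge-length-weighted graph from the multiset $D(\G)$ of neighborhood-distance multisets (using that each edge length of a generic $\mX$ appears exactly twice, once in each endpoint's constituent), then apply global rigidity. You are in fact more careful than the paper on one point in the necessity direction: you explicitly rule out that the non-congruent $\mX'$ supplied by rigidity theory could still be geometrically isomorphic to $\mX$ via a nontrivial automorphism of $\mA$ (using that an automorphism fixing every edge setwise on a connected graph with a degree-$\ge 2$ vertex must be the identity), whereas the paper simply asserts $(\mA,\mX)\not\cong(\mA,\mX')$.
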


\begin{proof}[Proof idea]
if $\mA$ is not globally rigid, then according to \cite{gortler2010characterizing}, for \emph{every} generic $\mX$ there exists $\mX'$ for which distances along edges are preserved, but $\mX$, $\mX'$ are not related by a rigid motion.
It's easy to see any I-GGNN, which is based only on distances along edges, can't separate such graphs.

On the other hand,  generic geometric graphs
 have distinct pairwise distances. The full proof (see Appendix \ref{Proof of I-GGNN}) shows that this can be used to reconstruct the combinatorial graph $\mA$. By global rigidity, we can then reconstruct $\mX$, up to rigid motion.
\end{proof}
Note that \autoref{thm:iggnn} is a generalization of \cite{gramnet} that showed maximally expressive I-GGNN generically identify full graphs. We now showed this is true for all generically globally rigid graphs.
\paragraph{I-GGNN generic separation by power graph preprocessing} 
A natural strategy to overcome the lack of generic separation for graphs that are not generically globally rigid, as proven in Theorem \ref{thm:iggnn}, is to replace these graphs with graphs that \emph{are} generically globally rigid. This replacement can be done using power graphs:
For a given graph $\mathcal{G}$ and natural number $k$, the power graph $\mathcal{G}^{k}$ is defined to have the same nodes as the original graph $\mathcal{G}$. A pair of nodes are connected by an edge in the power graph if and only if there is a path between them in the original graph of length $\leq k$. 

In \citep{power-graph, power-graph2}, it was shown that if the original graph $\G$ is connected, then the power graph $\G^{d+1}$ is generically globally rigid in $\mathbb{R}^{d}$. Accordingly, when the input graph $\G$ is connected but not generically globally rigid, our theory suggests that applying I-GGNN not to the original graph but to its $d+1$ power may benefit separation. An experiment illustrating this idea is presented in \autoref{tab:power_graph}.

\section{Expressive power of E-GGNN}
When using E-GGNN rather than I-GGNN,  we will have generic separation if and only if the graph is connected. 
\begin{restatable}{theorem}{GGNN}
[expressive power of E-GGNN]\label{thm:ggnn}
Let $d$ be a natural number. Let $F$ be an E-GGNN.  Let $\mA$ be a disconnected graph. Then, $F$ generically fails to identify $\mA$.\\ Conversely, if $\mA$ is connected and $F$ is a maximally expressive E-GGNN with depth $T\geq d+1$, then $F$ generically identifies $\mA$.
\end{restatable}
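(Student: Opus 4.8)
I would handle the two implications separately, starting with the easy one. If $\mA$ is disconnected, write the node set as a disjoint union $V=C\sqcup C'$ of two nonempty sets with no edge between them. Fix any $\mX$ and let $\hat\mX$ agree with $\mX$ on $C'$, with $\hat\vx_i=\vx_i+\vs$ for $i\in C$, where $\vs\in\RR^d$ has norm larger than twice the diameter of $\mX$. Since no edge leaves $C$, for $i\in C$ the iterates $\vv_i^{(0)},\dots,\vv_i^{(T)}$ are built solely from difference vectors $\vx_k-\vx_l$ along edges inside $C$, and these are unchanged when all of $C$ is translated by $\vs$; likewise nothing changes on $C'$. Hence every $\finv(\vv_i^{(T)})$, and therefore $\sglobal$, is the same for $(\mA,\mX)$ and $(\mA,\hat\mX)$, while $\operatorname{diam}(\hat\mX)>\operatorname{diam}(\mX)$ shows these two geometric graphs are not isomorphic (permutations and rigid motions preserve pairwise distances). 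So $F$ fails to identify $(\mA,\mX)$ for \emph{every} $\mX$, and in particular it generically fails to identify $\mA$.

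\textbf{The connected case.} This is the substantial direction, and I would run it through geometric unfolding trees. Assume $\mA$ is connected, $\mX$ is generic, and $F(\mA,\mX)=F(\hat\G)$ with $\hat\G=(\hat\mA,\hat\mX)\in\Gd$. \emph{(i)} Maximal expressivity makes each $f_{(t)}$ injective on multisets, so $\vv_i^{(t+1)}$ determines $\lms(\vv_j^{(t)},\vx_i-\vx_j):j\in\N_i\rms$; unrolling the recursion, $\vv_i^{(T)}$ is determined by --- and is a rotation-equivariant injective function of --- the depth-$T$ unfolding tree $U_i^{(T)}$ at $i$, i.e.\ the tree of walks of length $\le T$ starting at $i$, with each tree-edge carrying the difference vector of its endpoints. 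Since $\finv$ is injective up to rotation and \textsc{ReadOut} is injective on multisets, equality of the global features yields a bijection $\phi$ between the node sets (so $n=\hat n$) and orthogonal maps $Q_i$ with $U^{(T)}_{\phi(i)}(\hat\G)=Q_i\cdot U_i^{(T)}(\mA,\mX)$. \emph{(ii)} For generic $\mX$ distinct nodes have distinct positions, so along any walk from $i$ the accumulated edge-label equals $\vx_i-\vx_{(\mathrm{endpoint})}$ and takes distinct values at distinct endpoints; reading this off $U_i^{(T)}$ recovers the relative positions $\{\vx_j-\vx_i : j\in B_T(i)\}$ and the induced subgraph of $\mA$ on the radius-$(T-1)$ ball $B_{T-1}(i)$ (an edge with both ends within distance $T-1$ of $i$ is exhibited by a walk of length $\le T$), and transporting by $Q_i$ matches this with the corresponding data of $\hat\G$ around $\phi(i)$. \emph{(iii)} Because $T\ge d+1$, the ball $B_d(i)\subseteq B_{T-1}(i)$ has at least $\min(d+1,n)$ nodes, which for generic $\mX$ affinely span $\RR^d$ when $n\ge d+1$; and for an edge $i\sim i'$ one has $B_d(i)\subseteq B_T(i)\cap B_T(i')$, so the reconstructions centered at $i$ and at $i'$ agree on a spanning set of points, forcing $Q_i(\vx_j-\vx_k)=Q_{i'}(\vx_j-\vx_k)$ for all $j,k\in B_d(i)$ and hence $Q_i=Q_{i'}$. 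Connectedness then makes all $Q_i$ equal to a single $Q$; the edge relations $\hat\vx_{\phi(j)}-\hat\vx_{\phi(i)}=Q(\vx_j-\vx_i)$ integrate along a spanning tree of $\mA$ to $\hat\mX=Q\,\mX\circ\phi^{-1}+\vt$ for a fixed $\vt$, and together with \emph{(ii)} this makes $\phi$ a graph isomorphism; hence $(\mA,\mX)\cong\hat\G$.

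\textbf{The main obstacle.} I expect the hard part to be step \emph{(iii)}, together with the bookkeeping that reconciles the globally defined bijection $\phi$ with the local tree structure of step \emph{(ii)} --- that is, showing that around each node $\phi$ really is the isomorphism read off from the unfolding trees, so that the rotations $Q_i$ are genuinely comparable; I would try to establish this by induction on the ball radius. This is exactly where the hypothesis $T\ge d+1$ enters: it guarantees that the always-present radius-$d$ balls, which affinely span $\RR^d$ for generic $\mX$, sit inside overlapping neighborhoods, so the local rigid frames can be glued in only one way (for $T=d$ a reflection across a hyperplane could survive, producing a genuine counterexample). Two further points require care: the comparison graph $\hat\G$ is arbitrary and need not be generic, so step \emph{(ii)} must be carried out for $\hat\G$ using only that its unfolding trees are rotated copies of those coming from the generic $(\mA,\mX)$, and one should check that this rigidity already excludes coincident or locally indistinguishable nodes in $\hat\G$; and the small cases $n\le d$, where no configuration affinely spans $\RR^d$, are handled directly, since then $\operatorname{diam}(\mA)<T$, every node already reconstructs the whole graph, and it suffices for the $Q_i$ to agree on the affine hull of $\mX$.
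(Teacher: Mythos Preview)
Your disconnected case is correct; you translate one component while the paper rotates it, but both leave all edge-difference vectors (hence all E-GGNN features) unchanged while breaking the isomorphism. For the connected direction your proposal takes a genuinely different endgame from the paper. Both arguments begin the same way --- pick a bijection so $s_i=\hat s_i$, obtain rotations $R_i$ with $\vv_i^{(T)}=R_i\hat\vv_i^{(T)}$, and then unwind layer by layer --- and the paper in fact carries out precisely the ``bookkeeping'' you flag as the main obstacle: using that the edge-lengths $\|\vx_j-\vx_k\|$ of a generic $\mX$ are all distinct, it shows first that $\mA=\hat\mA$ (so the local matchings are forced to coincide with the global relabeling) and then inductively that $\vx_i-\vx_k=R_i(\hat\vx_i-\hat\vx_k)$ for every $k$ within $t$ hops of $i$, up to $t=d{+}1$. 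Where the proofs diverge is after this induction. The paper simply reads off $\|\vx_i-\vx_k\|=\|\hat\vx_i-\hat\vx_k\|$ for all pairs at graph-distance $\le d{+}1$ and invokes the known theorem that the $(d{+}1)$-power of a connected graph is generically globally rigid. Your step (iii) instead argues directly that adjacent $Q_i,Q_{i'}$ agree on an affinely spanning overlap $B_d(i)$ and hence coincide, then propagates along a spanning tree to exhibit the rigid motion explicitly. Your route is more self-contained --- completed, it would amount to an inline proof of the power-graph rigidity fact --- while the paper's route is shorter and modular, outsourcing the gluing to a cited rigidity result. Either way the crux is the same distance-matching induction, which the paper executes and you have correctly located but not carried out.
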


\begin{proof}[Proof idea]
Suppose a geometric graph $\mathcal{G}$ isn't connected. We can apply two distinct rigid motions to two connected components and obtain a new geometric graph, $\mathcal{G}'$, which cannot be separated from $\mathcal{G}$ using E-GGNN. This is illustrated in \autoref{fig:teaser}(A).
Conversely, in the appendix \autoref{Identifiable}, we show that the features $s_i$ obtained after $d+1$ iterations of a maximally expressive E-GGNN can encode the multi-set of all distances $\|x_i-x_j\| $ for all $j$ which are $d+1$-hop neighbors of $i$ (a similar claim was shown by \cite{on_the_expressive_power}). Thus, $d+1$ iterations of maximally expressive E-GGNN can simulate the application of a single iteration of an I-GGNN on the $d+1$ power graph. The proof then follows from the fact that the $(d+1)$ power graph is generically globally rigid \citep{power-graph, power-graph2}. 
\end{proof}

An immediate result of this theorem is that generic graph separation is a local problem with a problem radius of $d+1$  \citep{alon2020bottleneck} .\footnote{A task's problem radius denoted by $r$ is the minimal integer where the task becomes solvable upon each node observing its neighborhood within a $r$-hop radius and aggregating all node's information.} We note that the generic assumption allows us to reconstruct the coordinates of the nodes and the combinatorial graph. Without the generic assumption, even models based on $2$-WL that are complete on full graphs cannot reconstruct the combinatorial graph. Setting all positions to zero nullifies the geometric information, and $2$-WL is insufficient to distinguish all combinatorial graphs.

\section{Building a generically maximally expressive E-GGNN}\label{sec:maximally}
To obtain generic separation guarantees as in Theorems \ref{thm:iggnn}, \ref{thm:ggnn} for a practical instantiation of E-GGNN (or I-GGNN), we will need to  
 show that the instantiation is fully expressive. This type of question was studied rather extensively for combinatorial graphs \citep{Neural_Injective,aamand2022exponentially}, and these results can be used to show that some simple realizations of I-GGNN are maximally expressive. We discuss this in more detail in Appendix \ref{building i-ggnn}. 
 
For E-GGNN, the requirement that $f_{(t)}$ is simultaneously injective (up to permutation) and equivariant to rotations seems to be challenging to attain with a simple E-GGNN construction, and indeed, \cite{on_the_expressive_power} who suggested the notions of maximally expressive E-GGNN, did not give any indication of how they could be constructed in practice. Moreover, in Appendix \ref{maximally_expressive_egnn}, we show that a maximally expressive E-GGNN can separate all pairs of full graphs and separate geometrically all pairs of connected graphs\footnote{We define the geometric separation of a geometric graph if we can separate the point clouds up to group action, without necessarily reconstructing the combinatorial graph.}, indicating constructing such a network may be computationally expensive, as currently, such guarantees are only achieved by networks of relatively high complexity based on $2$-WL subgraph aggregation techniques. 

In our setting, however, the problem is less severe since, from the outset, we are only interested in analyzing network behavior on generic graphs. Accordingly, we would like to require our E-GGNN to be maximally expressive only for generic graphs. It turns out that this goal can be achieved with a straightforward E-GGNN architecture, which we name $\us$. Our next step will be to present this architecture.

\subsection{The $\us$ architecture}
\label{architecture}
Our goal in this section is to build a maximally expressive EGNN. For this, we need to implement injective readout and aggregation functions $f_{(t)},\finv, \textsc{ReadOut}$ (see equations \ref{eq:GGNN}-\ref{eq:readout}. 
The $\us$ architecture we suggest resembles an EGNN architecture \citep{egnn} with multiple equivariant channels as in \citep{levy2023using}.
The $\us$ architecture depends on two hyper-parameters: depth $T$ (the term ‘depth’ refers to the number of MPNN iterations) and channel number $C$. The input to the architecture is a geometric graph $(\mA,\mX)$.
\paragraph{Aggregation}
The architectures then maintains, at each iteration $t$ and node $i$, an 'equivariant' node  feature with $C$ channels
\begin{equation}
    \vv^{(t)}_i=(v^{(t)}_{i,1},\ldots,v^{(t)}_{i,C}), \quad v^{(t)}_{i,c} \in \RR^{d}, c=1,\ldots,C
\end{equation}
These feature values are initialized to zero and are updated via the rotation and permutation equivariant aggregation formula
\begin{equation}
    v^{(t+1)}_{i,q}=\sum_{j\in \gN_i}\left( \phi^{(t,q,0)}(\|\vx_i-\vx_j\|,\|\vv_j\|)(\vx_i-\vx_j)+\sum_{c=1}^C \phi^{(t,q,c)}(\|\vx_i-\vx_j\|,\|\vv_j\|)v_{jc}^{(t)}  \right) 
\end{equation}
where $\|\vv_j\|$ is the $C$ dimensional vector containing the norms of the vectors $v_{jc}, c=1,\ldots,C$. 
\paragraph{Construction of $\finv$ and \textsc{ReadOut}}
After $T$ iterations, we obtain rotation and permutation invariant node features $\vs_i\in \RR^C$ and global feature $\sglobal \in \RR^C $ via
\begin{align}
s_{i,q}=\|\sum_{c=1}^C \theta_{c,q}v^{(T)}_{i,c}\|^2 , \quad
\sglobal_q=\sum_{i=1}^n \phi_{\mathrm{global}}^{(q)}(\vs_i), \quad q=1,\ldots,C
\end{align}
Where $\theta$ is a learnable matrix, and 
the functions $\phi_{\mathrm{global}}^{(q)},\phi^{(t,q,c)}$ are
all fully connected neural networks with a single layer, an output dimension of $1$, and an analytic non-polynomial activation (in our implementation, TanH activation). 
The complexity of $\us$ is linear in $N\cdot C$, as we discuss in Appendix \ref{complexity}. 

The next theorem shows that $\us$ is generically maximally expressive: \begin{restatable}{theorem}{express}
\label{thm:maximally}
Let $d, N, T$ be natural numbers. Let $F$ be a maximally expressive E-GGNN of depth $T$, and let $F_\theta$ denote the $\us$ architecture with depth $T$ and $C=2Nd+1$ channels. Then, for Lebesgue almost every choice of network parameters $\theta$, we have that for all generic $\G,\hat \G \in \GdN$, 
$$ F(\G)=F(\hat \G) \iff F_\theta(\G)=F_\theta(\hat \G) $$
\end{restatable}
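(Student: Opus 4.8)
The plan is to establish the two directions of the biconditional separately, with the substantive work being the "$\Leftarrow$" direction; the "$\Rightarrow$" direction will follow almost immediately from the observation that $\us$ is itself an instance of the E-GGNN framework of \eqref{eq:GGNN}-\eqref{eq:readout}. Indeed, the aggregation rule for $v^{(t+1)}_{i,q}$ is a rotation- and permutation-equivariant function of the multiset $\lms (\vv_j^{(t)}, \vx_i-\vx_j) \mid j \in \gN_i \rms$, and the final maps $\vs_i$, $\sglobal$ are a rotation-invariant $\finv$ followed by a permutation-invariant \textsc{ReadOut}. Since a maximally expressive E-GGNN $F$ uses \emph{injective} aggregation and read-out functions, any two graphs separated by $\us$ (which uses particular, not necessarily injective, choices) are automatically separated by $F$; this gives $F_\theta(\G) \neq F_\theta(\hat\G) \implies F(\G)\neq F(\hat\G)$, i.e. $F(\G)=F(\hat\G) \implies F_\theta(\G)=F_\theta(\hat\G)$, for \emph{all} graphs, generic or not, and for every parameter choice $\theta$.

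For the converse, the goal is to show that for a.e.\ $\theta$, whenever $\G,\hat\G$ are generic and $F(\G)=F(\hat\G)$ we also have $F_\theta(\G)=F_\theta(\hat\G)$ --- equivalently, by contrapositive, that $\us$ separates every pair of generic graphs that a maximally expressive E-GGNN separates. By Theorem \ref{thm:ggnn}, a maximally expressive E-GGNN of depth $T$ separates exactly the generic graphs whose underlying combinatorial graph is connected (when $T\geq d+1$); more precisely, one should track what information the depth-$T$ maximally expressive E-GGNN extracts and show $\us$ recovers the same invariant. The heart of the argument is therefore a \emph{simulation / genericity} lemma: for a.e.\ choice of the weights $\theta$ of the single-layer TanH networks $\phi^{(t,q,c)}, \phi^{(q)}_{\mathrm{global}}$, and for all generic $\G, \hat\G \in \GdN$, the $\us$ features are injective in the relevant sense --- i.e. the map induced by one $\us$ layer is injective on the (finite-dimensional, semialgebraic) set of feature multisets arising from generic inputs, up to the intrinsic rotation/permutation ambiguities. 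This is proved by the standard "analytic non-polynomial activation + Lebesgue-generic linear readout separates finitely many points / moment-type arguments" technique (cf. the style of \cite{Neural_Injective, wang2024rethinking, villar2021scalars}): because TanH is analytic and non-polynomial, a finite family of single-neuron features $x \mapsto \mathrm{TanH}(\langle w, x\rangle + b)$ with generic $(w,b)$ separates any prescribed finite set of distinct points, and the "bad" parameter sets on which separation fails are contained in proper analytic subvarieties, hence Lebesgue-null; taking a countable union over the finitely many layers/channels and over a suitable countable dense family of "test configurations" keeps the bad set null. The choice $C = 2Nd+1$ channels is exactly the dimension count needed so that the equivariant channels can carry enough linearly-independent directions to encode the $\le Nd$-dimensional position data together with its "neighbor-difference" refinements at each hop (a Whitney-embedding-style $2\cdot(\dim)+1$ bound), guaranteeing the per-layer injectivity survives.

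Concretely I would proceed in the following steps. (1) Verify that $\us$ is a legitimate E-GGNN and deduce the easy direction as above. (2) State and prove the genericity lemma that a single-layer analytic non-polynomial network with Lebesgue-generic parameters separates any finite set of points in a bounded-dimensional space, and more generally realizes an injective map on a fixed semialgebraic domain up to its symmetries --- controlling the null "bad set" of $\theta$ via a countable union of proper analytic subvarieties indexed by layers, channels, and a countable family of generic test graphs. (3) Push this through the $T$ layers inductively: assuming the depth-$t$ $\us$ features determine the same information as the depth-$t$ maximally expressive E-GGNN features on generic inputs, show the depth-$(t{+}1)$ update (with $C=2Nd+1$ equivariant channels, generic mixing weights $\phi^{(t,q,c)}$ and generic $\theta_{c,q}$ in the invariant read-out) preserves this, using the genericity lemma to get injectivity of the aggregation and of $\finv$ on generic feature configurations. (4) Conclude that $F_\theta$ and $F$ induce the same equivalence relation on generic graphs in $\GdN$, which is the claim. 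The main obstacle is step (2)-(3): making precise the sense in which "a single TanH neuron with generic weights is injective enough" on the non-finite but semialgebraic set of feature multisets produced by generic graphs, and in particular arguing that the exceptional parameter set remains Lebesgue-null \emph{uniformly over the continuum of generic inputs} --- this requires reducing the continuum of generic inputs to a countable dense or algebraically-parametrized family (or invoking that the relevant "collision" conditions are themselves polynomial identities in $(\theta, \mX, \hat\mX)$ so that genericity in $\mX$ plus genericity in $\theta$ jointly avoid them) and carefully handling the rotation ambiguity of the equivariant features, which is where the $2Nd+1$ channel bound does its work.
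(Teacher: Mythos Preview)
Your overall skeleton is right: one direction is immediate because $\us$ is an E-GGNN instance and the maximally expressive E-GGNN refines every instance (though note a slip: the first sentence of your second paragraph restates the $\Rightarrow$ direction rather than its converse, before you switch to the correct contrapositive). The substantive gap is exactly where you flag it, in steps (2)--(3): controlling the bad parameter set \emph{uniformly} over the continuum of input pairs. Neither of your proposed fixes closes this. Reducing to a countable dense family of test graphs does not work: for a fixed $\theta$ the set of pairs with $F_\theta(\G)=F_\theta(\hat\G)$ is closed, so separating a dense set of ``should-be-separated'' pairs does not force separation of all of them. The alternative, that ``collision conditions are polynomial in $(\theta,\mX,\hat\mX)$,'' is also not usable as stated: the features involve $\tanh$ and norms, so the conditions are not polynomial, and in any case you still need a mechanism that converts ``for each pair some $\theta$ separates'' into ``some fixed finite tuple of parameters separates all pairs.''

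The paper supplies precisely this mechanism via the \emph{finite witness theorem} of \cite{Neural_Injective}: if $\M$ is a $\sigma$-subanalytic set of dimension $D$ and $F(z;\theta)$ is analytic in $\theta$, then for Lebesgue almost every $(\theta^{(1)},\ldots,\theta^{(D+1)})$ one has $\{z:F(z;\theta^{(i)})=0\ \forall i\}=\{z:F(z;\theta)=0\ \forall\theta\}$. This is where $C=2Nd+1$ actually comes from---the domain of pairs $(\mX,\hat\mX)$ has dimension at most $2Nd$, so $2Nd+1$ channels (independent parameter copies) suffice, applied layer by layer with the observation that images of $\sigma$-subanalytic sets under $\sigma$-subanalytic maps do not increase dimension. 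A further subtlety you do not address is that the set of \emph{generic} point clouds is not obviously $\sigma$-subanalytic; the paper instead works on the strictly larger semialgebraic set $\GdNdistinct$ of graphs with pairwise-distinct nonzero edge lengths, which contains all generic graphs. Once the finite witness theorem is in place, the problem reduces to showing that \emph{some} parameter separates any given non-equivalent pair at each stage, which the paper handles with three explicit lemmas (for the equivariant aggregation, for $\finv$, and for the readout). Your inductive step (3) has the right shape, but without the finite witness theorem the induction cannot be closed over the uncountable input domain.
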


\begin{proof}[Proof idea]
 The proof relies on the methodology developed by \citet{gortler_and_dym, Neural_Injective} and \citet{gramnet}. We note that each step in $\us$ employs $C$ identical analytic functions indexed by $q=1,\ldots, C$, with independent parameters. \citet{gortler_and_dym} and \citet{Neural_Injective} showed that in this setting, if every pair of non-isomorphic inputs can be separated by some parameter of the original function, then $C=2Nd+1$ copies of this function with random parameters will suffice to separate \emph{all pairs} uniformly. The proof is then reduced to showing that any pair of non-isomorphic inputs can be separated by some parameter vector, which we prove in the appendix \ref{sub:maximal}.
\end{proof}
Joining this theorem with our theorem on maximally expressive E-GGNN, we deduce 
\begin{corollary}\label{cor:us}
Let $d, N$ be natural numbers. 
let $F_\theta$ denote the $\us$ architecture with depth $T=d+1$ and $C=2Nd+1$ channels. Then for Lebesgue almost every $\theta$, for every pair of graphs $\G, \G'\in \GdN $ which are generic and connected,
$$F_{\theta}(\G)=F_{\theta}(\hat \G) \iff \G, \hat \G \text{ are geometrically isomorphic}$$
\end{corollary}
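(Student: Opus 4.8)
The plan is to obtain \autoref{cor:us} as a direct composition of \autoref{thm:ggnn} and \autoref{thm:maximally}, applied to one and the same reference network. First I would fix, once and for all, a maximally expressive E-GGNN $F$ of depth $T=d+1$; the existence of such an $F$ was noted earlier from set-theoretic considerations. The point of this particular choice is that $F$ is simultaneously the object to which both theorems apply: the converse direction of \autoref{thm:ggnn} tells us that, since $T\geq d+1$, $F$ generically identifies \emph{every} connected combinatorial graph $\mA$, while \autoref{thm:maximally} tells us that for Lebesgue-almost every parameter vector $\theta$, the depth-$(d+1)$, $C=2Nd+1$-channel $\us$ network $F_\theta$ satisfies $F(\G)=F(\hat\G)\iff F_\theta(\G)=F_\theta(\hat\G)$ for all generic $\G,\hat\G\in\GdN$.

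Next I would fix one such generic $\theta$ --- note that the almost-everywhere set in \autoref{thm:maximally} is a single set of parameters that works uniformly over all pairs, so one choice of $\theta$ suffices --- and take an arbitrary pair of generic, connected $\G=(\mA,\mX)$ and $\hat\G\in\GdN$. The forward implication is the substantive one: if $F_\theta(\G)=F_\theta(\hat\G)$, then \autoref{thm:maximally} gives $F(\G)=F(\hat\G)$; since $\mA$ is connected and $\mX$ is generic, \autoref{thm:ggnn} says $F$ identifies $(\mA,\mX)$ against all of $\GdN$, so $F(\G)=F(\hat\G)$ forces $\G$ and $\hat\G$ to be geometrically isomorphic. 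For the reverse implication I would simply observe that $F_\theta$ is permutation-, rotation-, and translation-invariant by construction: the message-passing updates are rotation equivariant and built from the translation-invariant differences $\vx_i-\vx_j$ over multisets, and the readout $\sglobal$ is assembled from norms of equivariant features aggregated over a multiset; hence geometrically isomorphic inputs receive equal values.

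I do not anticipate a genuine obstacle here: all of the real work lives in \autoref{thm:ggnn} and \autoref{thm:maximally}, and the corollary is a formal splicing of the two. The only point requiring care is the bookkeeping with the qualifiers --- checking that ``generic and connected'' is a special case of the ``generic'' hypothesis invoked in \autoref{thm:maximally}, that the identification statement of \autoref{thm:ggnn} is unaffected by restricting the ambient domain from $\Gd$ to $\GdN$, and that a single almost-every $\theta$ indeed serves all pairs at once rather than requiring a per-pair choice. Once these are verified, chaining the two equivalences yields the stated characterization.
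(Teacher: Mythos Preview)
Your proposal is correct and is exactly the argument the paper intends: the corollary is stated immediately after \autoref{thm:maximally} with the one-line justification ``Joining this theorem with our theorem on maximally expressive E-GGNN, we deduce,'' and you have spelled out that splicing of \autoref{thm:ggnn} and \autoref{thm:maximally} in the right order, including the invariance observation for the reverse implication and the uniformity of $\theta$ over all pairs.
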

Note that by \cite{on_the_expressive_power}, theorems 20, 22, the dimension of any injective embedding must be at least $nd-\frac{d(d-1)}{2}$, showing our embedding dimension $2nd+1$ is optimal up to factor of $2$ (when $n\gg d$).
\section{Experiments}
In the experiments section, we have three main objectives. First, we aim to demonstrate that invariant models are indeed strictly less expressive than equivariant models and that power graphs help to mitigate this gap. Second, we seek to show that our model can distinguish challenging graph examples, including those requiring multiple layers for separation, thereby avoiding the bottleneck phenomenon encountered in other geometric models. Finally, we aim to demonstrate the improved performance of our model on real-world benchmarks.
\subsection{Separation experiments}
\label{sepearion exps}
\paragraph{Enhancing Model Expressiveness: Power Graphs in Invariant vs. Equivariant Comparisons}
We begin with experiments to corroborate our theoretical findings. We consider examples where the graph $\mA$ is a line graph, which is connected but not generically globally rigid. We choose two pairs of geometric line graphs, Pair a and Pair b, depicted in Figure \ref{fig:Two pairs for power Graphs}. These pairs have identical distances along graph edges but are not isomorphic. Therefore, they cannot be distinguished by any I-GGNN but (at least when generic) can be distinguished by E-GGNN and our $\us$ architectures. 
Following the protocol from \cite{on_the_expressive_power}, we construct binary classification problems from Pair a and Pair b and apply our E-GGNN architecture $\us$, as well as three popular I-GGNN architectures: SchNet \citep{SchNet}, DimeNet \citep{DimeNet}, and SphereNet \citep{SphereNet}. 

The results are shown in the first column in Table \ref{tab:power_graph}. As expected, $\us$ perfectly separates pairs a and b, while SchNet fails to separate both. Note that DimeNet and SphereNet separate Pair a but not Pair b. This is because DimeNet and SphereNet compute angles at edges, which utilize information not contained in the graph $\G$ but only in the power graph $\G^{2}$. Indeed, distances between two-hop neighbors are sufficient for discriminating between the two geometric graphs in Pair a but not in Pair b, where the distance between the three-hop red and orange nodes is needed for separation. 
\begin{figure}[t]
    \centering
    \includegraphics[width=1.0\textwidth]{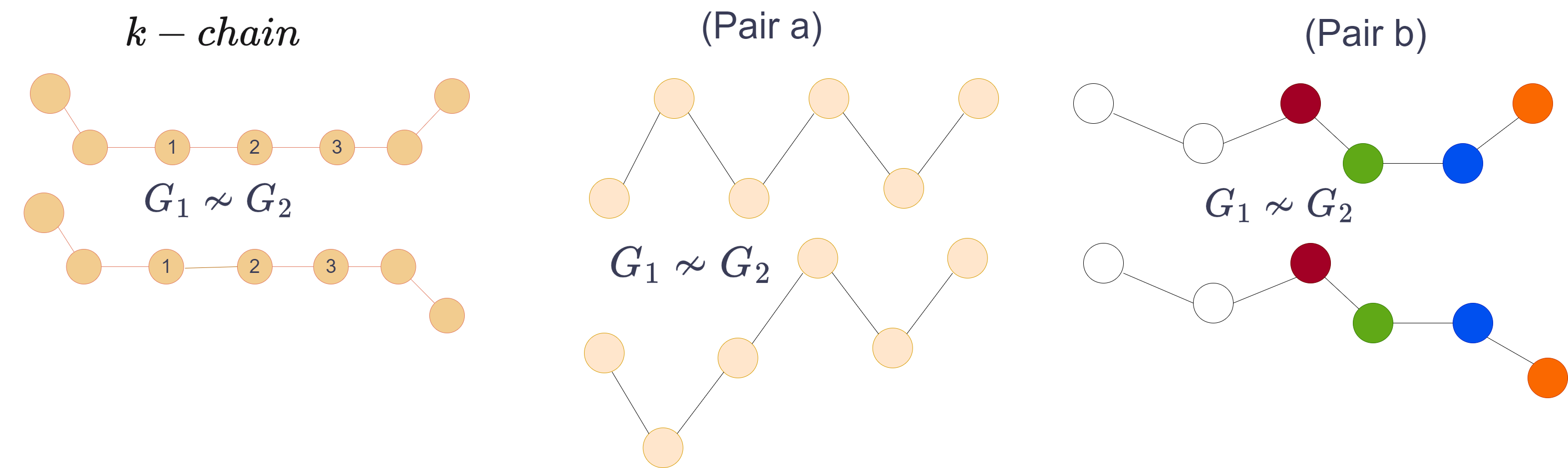}
    \caption{From Left to Right: $k$-chain pair graphs that are non-generic and require $\sim k/2$ blocks for separation, Pair a in which $2$-power graph is enough for I-GGNN to distinguish and Pair b in which $3$-power graph is necessary and sufficient for separation.}
    \label{fig:Two pairs for power Graphs}
\end{figure}
Next, as we suggested in Section \ref{sec:inv}, we apply the same algorithms to the powers of the original graph. As shown in Table \ref{tab:power_graph}, this improves the separation capabilities of the I-GGNN models. All invariant models succeed when the third power is taken, as expected since a connected graph's $d+1$ power (here $d=2$) is generically globally rigid. 
\begin{table}
\begin{center}
     \caption{E-GGNN models like $\us$ can separate the two pairs of connected graph in Figure \ref{fig:Two pairs for power Graphs}. I-GGNN models cannot separate unless power graphs are considered.}       
\begin{tabular}{clcccc}
            \toprule
            & \multicolumn{5}{c}{\textbf{Power graph}} \\
            & \textbf{GNN Layer} & $1$ & $2$ & $3$ &  \\
            \midrule                                              
            \multirow{4}{*}{\rotatebox[origin=c]{90}{Pair a}} &
            \gray{Maximal I-GGNN} & \gray{50\%} & \gray{100\%} & \gray{100\%} \\
            & SchNet \cite{SchNet} & \cellcolor{red!10} 50.0 {\scriptsize {\scriptsize ± 0.0}} & \cellcolor{green!10} 100.0 {\scriptsize {\scriptsize ± 0.0}} & \cellcolor{green!10} 100.0 {\scriptsize {\scriptsize ± 0.0}} \\
            & DimeNet \cite{DimeNet}  & \cellcolor{green!10} 100.0 {\scriptsize {\scriptsize ± 0.0}} & \cellcolor{green!10} 100.0 {\scriptsize {\scriptsize ± 0.0}} & \cellcolor{green!10} 100.0 {\scriptsize {\scriptsize ± 0.0}} \\
            & SphereNet \cite{SphereNet}  & \cellcolor{green!10} 100.0 {\scriptsize {\scriptsize ± 0.0}} & \cellcolor{green!10} 100.0 {\scriptsize {\scriptsize ± 0.0}} & \cellcolor{green!10} 100.0 {\scriptsize {\scriptsize ± 0.0}} \\
            \bottomrule
            & SchNet $_{\text{full graph}}$ \footref{schnet models}  & \cellcolor{green!10} 100.0 {\scriptsize ± 0.0} & \cellcolor{green!10} 100.0 {\scriptsize ± 0.0} & \cellcolor{green!10} 100.0 {\scriptsize ± 0.0} \\
            & SchNet$_{\text{global feat}}$ \footref{schnet models} & \cellcolor{green!10} 100.0 {\scriptsize ± 0.0} & \cellcolor{green!10} 100.0 {\scriptsize ± 0.0} & \cellcolor{green!10} 100.0 {\scriptsize ± 0.0} \\
            \bottomrule
             & $\us$(us) & \cellcolor{green!10} 100.0 {\scriptsize ± 0.0} & \cellcolor{green!10} 100.0 {\scriptsize ± 0.0} & \cellcolor{green!10} 100.0 {\scriptsize ± 0.0} \\
            \bottomrule
            \multirow{4}{*}{\rotatebox[origin=c]{90}{Pair b}} &
            \gray{Maximal I-GGNN} & \gray{50\%} & \gray{50\%} & \gray{100\%} \\
            & SchNet \cite{SchNet}  & \cellcolor{red!10} 50.0 {\scriptsize {\scriptsize ± 0.0}} & \cellcolor{red!10} 50.0 {\scriptsize {\scriptsize ± 0.0}} & \cellcolor{green!10} 100.0 {\scriptsize {\scriptsize ± 0.0}} \\
            & DimeNet \cite{DimeNet} & \cellcolor{red!10} 50.0 {\scriptsize {\scriptsize ± 0.0}} & \cellcolor{green!10} 100.0 {\scriptsize {\scriptsize ± 0.0}} & \cellcolor{green!10} 100.0 {\scriptsize {\scriptsize ± 0.0}} \\
            & SphereNet \cite{SphereNet}  & \cellcolor{red!10} 50.0 {\scriptsize {\scriptsize ± 0.0}} & \cellcolor{green!10} 100.0 {\scriptsize {\scriptsize ± 0.0}} & \cellcolor{green!10} 100.0 {\scriptsize {\scriptsize ± 0.0}} \\
            \bottomrule
            & SchNet $_{\text{full graph}}$ \footref{schnet models}  & \cellcolor{green!10} 100.0 {\scriptsize ± 0.0} & \cellcolor{green!10} 100.0 {\scriptsize ± 0.0} & \cellcolor{green!10} 100.0 {\scriptsize ± 0.0} \\
            & SchNet$_{\text{global feat}}$ \footref{schnet models} & \cellcolor{green!10} 100.0 {\scriptsize ± 0.0} & \cellcolor{green!10} 100.0 {\scriptsize ± 0.0} & \cellcolor{green!10} 100.0 {\scriptsize ± 0.0} \\
            \bottomrule
             & $\us$ (us) & \cellcolor{green!10} 100.0 {\scriptsize ± 0.0} & \cellcolor{green!10} 100.0 {\scriptsize ± 0.0} & \cellcolor{green!10} 100.0 {\scriptsize ± 0.0} \\
           \bottomrule
            \end{tabular}

            \label{tab:power_graph}
    \end{center} 
    \end{table}

%\end{wraptable}
\paragraph{Unpacking the Bottleneck: Layer Requirements for Complex Graph Separation}
\paragraph{k-chains}
Next, we consider the $k$ chain pairs suggested in \cite{on_the_expressive_power} and illustrated in the left of Figure \ref{fig:Two pairs for power Graphs}. By design, this is a pair of graphs with $k$ nodes, which can only be separated by $k-1$ hop distances. In \cite{on_the_expressive_power}, it was shown that a depth of at least $\lfloor k/2 \rfloor+1 $ is necessary to achieve separation by E-GGNN in this setting. Note that as most nodes in the chain graphs are on the same line, this point cloud is not generic, so the fact that for large $k$, more than $d+1$ iterations are needed does not contradict our theory.
In our experiments, we include two variations of SchNet \citep{SchNet}: full graph and global feature. Those variations use all pairwise distances, thus showing better results than other invariant models. 
In the first experiment, we set $k = 4$.
Theoretically, $3$ blocks are needed for propagating the information, and $2$ blocks are insufficient.
As shown in Appendix \autoref{tab:4chains}, we succeed with a probability of $1$ over $10$ trials with $3$ blocks.
For checking long-term dependencies, we choose $k=12$ again; $7$ blocks are needed, and $6$ is insufficient. We examined all models with $7,8,9,10,11$ blocks, each $10$ times. As shown in \autoref{tab:12chains}, we are the only ones succeeding with $7$ blocks, and all other models need a much higher number of blocks. This may suggest our model is less influenced by over-squashing \cite{alon2020bottleneck}. In the appendix, \autoref{tab:time_com}, we add a time comparison table for all models using the $128$ dimensional feature.

% Here fig.

\footnotetext{\label{schnet models} Here, SchNet global and full are invariant models utilizing the full graph structure (unlike other models utilizing the given combinatorial structure).}

\paragraph{Hard examples}
In \autoref{tab:hard_pc_results} in the appendix, we show that our model can also perfectly separate challenging examples from \citep{PhysRevLett.125.166001,gcnn}, using the experiment setup taken from \cite{gramnet}. These geometric graph pairs cannot be separated by I-GGNN, even on full graphs, and are also difficult to separate using equivariant methods like EGNN. 

\paragraph{Chemical property experiments}
Finally, we check our model's ability to learn regression-invariant real-world chemical properties, a critical task in computational chemistry and material science. Learning chemical properties accurately is essential for predicting the behaviors of compounds, enabling advancements in drug discovery, material design, and environmental science. By effectively capturing these properties, the model demonstrates its potential to assist in solving complex chemical problems, where precise property prediction can reduce experimental costs and guide targeted research.
As in \citep{baselines}, we compare to the following models: Random Forest \citep{Random_forest}, LSTM \citep{LSTM}, Transformer \citep{Transformer}, GIN \citep{GIN}, GIN+VN \citep{GIN-VN}, ChemProp \citep{ChemProp}, GraphGPS \citep{GraphGPS}, SchNet \citep{SchNet}, DimeNet++ \citep{DimeNet++}, GemNet \citep{gemnet}, PaiNN \citep{painn}, ClofNet \citep{clofnet}, LEFTNet \citep{leftnet}, and Marcel \citep{baselines}.
Our task is learning chemical property prediction on three datasets: Drugs \citep{Drugs}, Kraken \citep{Kraken}, and BDE \citep{BDE}. Drugs are the largest and most challenging dataset, containing 560K samples; BDE is a medium-sized dataset with 80K datasets; and Kraken is the smallest, with 22K samples.
The baseline model performance, data split, and training procedure are taken from \cite{baselines}. In particular, we ran each experiment three times and reported the model's test error with the best validation performance. 
In Table \ref{mean} in the appendix, we report the mean and standard deviation. 
\begin{table}[t]
    \centering
       \caption{Performance of 1D, 2D, and 3D baseline MRL models and the best results from ensemble learning strategies on 3D GNNs. The metric used is the Mean Absolute Error (MAE, \(\downarrow\)). The \textbf{bold} indicates the best-performing model, while \underline{underlined} denotes the second-best.}
    \small  % Reduce font size
    \setlength{\tabcolsep}{4pt}  % Adjust column spacing
    \renewcommand{\arraystretch}{1.2}  % Adjust row height
    \begin{tabular}{@{}lccccccccccc@{}}
        \toprule
        \rowcolor{gray!25}
        & & \multicolumn{3}{c}{\textbf{Drugs}} & \multicolumn{4}{c}{\textbf{Kraken}} & \multicolumn{2}{c}{\textbf{BDE}} \\
        \cmidrule(lr){3-5} \cmidrule(lr){6-9} \cmidrule(lr){10-11}
        \textbf{Category} & \textbf{Model} & \textbf{IP} & \textbf{EA} & $\boldsymbol{\chi}$ & $\mathbf{B_5}$ & \textbf{L} & $\mathbf{BurB_5}$ & $\mathbf{BurL}$ & \textbf{BE} \\
        \midrule
        \multirow{3}{*}{1D} 
        & Random Forest  & 0.4987 & 0.4747 & 0.2732 & 0.4760 & 0.4303 & 0.2758 & 0.1521 & 3.0335 \\
        & LSTM           & 0.4788 & 0.4648 & 0.2505 & 0.4879 & 0.5142 & 0.2813 & 0.1924 & 2.8279 \\
        & Transformer    & 0.6617 & 0.5850 & 0.4073 & 0.9611 & 0.8389 & 0.4929 & 0.2781 & 10.0771 \\
        \midrule
        \multirow{4}{*}{2D} 
        & GIN            & 0.4354 & 0.4169 & 0.2260 & 0.3128 & 0.4003 & 0.1719 & 0.1200 & 2.6368 \\
        & GIN+VN         & 0.4361 & 0.4169 & 0.2267 & 0.3567 & 0.4344 & 0.2422 & 0.1741 & 2.7417 \\
        & ChemProp       & 0.4595 & 0.4417 & 0.2441 & 0.4850 & 0.5452 & 0.3002 & 0.1948 & 2.6616 \\
        & GraphGPS       & 0.4351 & 0.4085 & 0.2212 & 0.3450 & 0.4363 & 0.2066 & 0.1500 & 2.4827 \\ 
        \midrule
        \multirow{8}{*}{3D} 
        & SchNet         & 0.4394 & 0.4207 & 0.2243 & 0.3293 & 0.5458 & 0.2295 & 0.1861 & 2.5488 \\
        & DimeNet++      & 0.4441 & 0.4233 & 0.2436 & 0.3510 & 0.4174 & 0.2097 & 0.1526 & \underline{1.4503} \\
        & GemNet         & 0.4069 & 0.3922 & \textbf{0.1970} & 0.2789 & 0.3754 & 0.1782 & 0.1635 & 1.6530 \\
        & PaiNN          & 0.4505 & 0.4495 & 0.2324 & 0.3443 & 0.4471 & 0.2395 & 0.1673 & 2.1261 \\
        & ClofNet        & 0.4393 & 0.4251 & 0.2378 & 0.4873 & 0.6417 & 0.2884 & 0.2529 & 2.6057 \\
        & LEFTNet        & 0.4174 & 0.3964 & 0.2083 & 0.3072 & 0.4493 & 0.2176 & 0.1486 & 1.5328 \\
        & Marcel         & \underline{0.4066} & \underline{0.3910} & \underline{0.2027} & \underline{0.2225} & \underline{0.3386} & \underline{0.1589} & \underline{0.0947} & 1.4741 \\
        & $\us$(Ours)      & \textbf{0.3965} & \textbf{0.3686} & 0.2068 & \textbf{0.1878} & \textbf{0.1738} & \textbf{0.1530} & \textbf{0.0626} & \textbf{0.3170} \\
        \bottomrule
    \end{tabular}
    \label{cpe}
\end{table}

As we see in \autoref{cpe}, we show comparable or improved results in all tasks. For example, in BDE, we have an improvement of $78\%$, in Kraken property \emph{burL} $33\%$, and in the property \emph{L} $48\%$. 
Remarkably, our method outperforms competing more complex methods like GemNet, which employs high-order irreducible representations. This can be explained by the generic completeness of our method, coupled with results we will show in the appendix \autoref{tab:norms_repeat}, which shows that these datasets have a small number of norm-repetitions, indicating  that they are  ‘close to generic’ .

\section{Limitation and Future Work}
In this work, we characterized the generic expressive power of I-GGNN and E-GGNN, showed that $\us$ is a generically maximally expressive E-GGNN, and showed the effectiveness of this architecture on chemical regression tasks. A limitation of our work is that it may struggle with symmetric and non-generic input. Future work could consider assessing the expressive power of non-MPNN models like geometric 2-WL on sparse graphs to address such input efficiently.
\section{Acknowledgments}
We thank Snir Hordan for reviewing our manuscript and for the helpful discussions and suggestions. We would like to thank Idan Tankel for his technical help and support. N.D. and Y.S. are funded by Israeli Science Foundation grant no.
272/23.
\bibliography{iclr2025_conference}
\bibliographystyle{iclr2025_conference}
\newpage
\appendix
\section{Related Work}
This section discusses related work that is not discussed in the main text. 
\paragraph{Equivariant models}
In the context of learning on geometric graphs, models should be equivariant to the joint action of permutations, rotations, and translations. Besides the models discussed in the paper, some prominent models with this equivariant structure include Cormorant \citep{cormorant}, Vector Neurons \citep{deng2021vector}, and SE(3) transformers \citep{se3transformer}.

\paragraph{Completeness, universality and generic completeness}
The main text discusses related works on completeness focused on methods coming from the geometric learning community. Other complete methods, generally using similar ideas, appeared independently in the more chemistry-oriented community\citep{kurlin2023polynomial,widdowson2023recognizing,shapeev,nigam2020recursive}. Completeness in $2D$ was achieved in \cite{zz}, and completeness with respect to rigid motions (without permutations) was achieved by Comenet \citep{comenet}. Comenet also uses some genericity assumptions (e.g., that there is a well-defined notion of nearest neighbors).  

The main text discussed how generic completeness can be achieved using I-GGNN when the graph is globally rigid. A similar result can be obtained when considering the list of all distances \citep{reconstructible}. Generic completeness for full graphs can also be obtained using more complex distance-based features \citep{widdowson2022generic} or shape principal axes \citep{puny2021frame} (under the generic assumption that the principal eigenvalues are distinct).

\paragraph{Comparison with \cite{new}}  
Generic completeness without the full graph assumption, employing only the assumption that local neighborhoods are not symmetric, is discussed by \cite{new}. However, there are notable differences in key aspects of the results. Specifically, Theorem 4.2 in \cite{new} demonstrates that their proposed equivariant model can identify generic molecules using \( L \) message-passing iterations, where \( L \) is the graph's diameter. In contrast, our Theorem \ref{thm:ggnn} requires only \( d+1 \) iterations (where \( d = 3 \) in typical applications, resulting in a total of just four iterations).  

Moreover, while Theorem 4.2 in \cite{new} focuses on recovering the 3D structure, it does not address the combinatorial structure. Our Theorem \ref{thm:ggnn}, however, guarantees the recovery of both combinatorial and geometric structures in the generic case.  

Additionally, our Theorem \ref{thm:iggnn} precisely describes the conditions under which invariant models are, and are not, generically complete—a result not covered in \cite{new}.

\section{Implementation details}
Here, we detail the hyper-parameter choice.
For all experiments, we use AdamW optimizer \cite{AdamW}. We use a varying weight decay of $1e^{-4},1e^{-5}$ and learning rate of $1e^{-4}$. Reduce On Platue scheduler is used with a rate decrease of 0.75 with the patience of 15 epochs.
We use two blocks with $40$ channels in the challenging point cloud separation experiments.
For \emph{k-chain} experiments, we repeat each experiment $10$ times, with $10$ different seeds, and set the number of channels to be $60$, and a varying number of blocks, as detailed in the table.
For power graph experiments, we repeat each experiment $10$ times, with $10$ different seeds, and set the number of channels to be $60$ and $3$ blocks.
For chemical property experiments, all the procedure is taken from \cite{baselines}: each dataset is partitioned randomly into three subsets: 70\% for training, 10\% for validation, and
20\% for test. We set the number of channels to be $256$ and used $6$ blocks. Each model is trained over $1,500$ epochs. Experiments are repeated three times for all eight regression targets, and the results reported correspond to the model that performs best on the validation set in terms of Mean Absolute Error (MAE). In contrast to the experiments in \cite{baselines}, we didn't consider the EE dataset as it's currently not public on GitHub.
We utilize PyTorch and PyTorch-Geometric to implement all deep learning models. All
experiments are conducted on servers with NVIDIA A$40$ GPUs, each with 48GB of memory. All dataset details, including statistics, can be found in \cite{baselines}.

\section{Building perfect I-GGNN}
\label{building i-ggnn}
Here, we describe in detail how to construct a perfect I-GGNN. We will assume that we consider geometric graphs with up to $N$ nodes. The main ingredients needed are aggregation functions and \textsc{ReadOut} function that are injective on the space of all multisets of size $\leq N$. \cite{Neural_Injective} proved that the sum of projections and analytic activation can be used to construct such layers. Formally, be $\M$ semi-analytic set of dimension $D$, and $N$ vectors in $\mathbb{R}^{d}$ denoted by $\mX$. Be $A\in \mathbb{R}^{(2D+1) \times d}, b\in \mathbb{R}^{2D+1}$, $\sigma:\mathbb{R}\rightarrow \mathbb{R}$ analytic non-polynomial activation, then for Lebesgue almost every $A,b$ $$f_{A,b}(\mX) = \sum^{N}_{i=1} \sigma(A\cdot \mX_{i}+b)$$ is injective up to permutation.
Thus, the composition of such blocks and such \textsc{ReadOut} function with dimension $D = d\cdot N$ yields a maximally expressive model on all input domain. Here, $D$ is the intrinsic dimension of all features, ultimately produced by the $D$ dimensional input space. 
\section{Separation Power of Fully Maximally expressive E-GGNN }
\label{maximally_expressive_egnn}
In the main text, we proposed $\us$ a simple \emph{generically} maximal expressive E-GGNN. In this section, we show that if we have a maximally expressive E-GGNN on all input domain, we can reconstruct a full graph up to group action, and if the graph is connected, we can reconstruct its point cloud. This indicates that fully maximally expressive E-GGNN is as challenging as achieving completeness, which generally requires complex methods like $2$-WL-based GNNs.
\begin{theorem}
    \label{reconstruction}
    Let $\G = (\mA,\mX)$ be a geometric graph, where $\mA$ is the full graph on $n$ nodes, and $\mX$ is an arbitrary $d\times n$ matrix, then one iteration of maximally expressive E-GGNN can reconstruct $\mX$.
\end{theorem}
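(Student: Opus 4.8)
The plan is to first unpack what ``maximally expressive'' gives us after a single layer on the full graph, and then exploit the fact that, for a complete graph, a single node's aggregated message already determines the entire point cloud. Since $\mA$ is the full graph, $\gN_i$ is all of $[n]$ (or $[n]\setminus\{i\}$ — the difference will be immaterial), and with $\vv_j^{(0)}=0$ the update \eqref{eq:GGNN} gives $\vv_i^{(1)}=f_{(0)}\bigl(\lms \vx_i-\vx_j \mid j\in\gN_i\rms\bigr)$. Write $M_i:=\lms \vx_i-\vx_j \mid j\in\gN_i\rms\subset\RR^d$ for this multiset of difference vectors. First I would trace the definition of maximal expressivity to check that $\vs_i=\finv(\vv_i^{(1)})$ is a faithful encoding of the $\O(d)$-orbit $[M_i]$: $f_{(0)}$ is injective on multisets, so $\vv_i^{(1)}$ determines $M_i$; it is $\O(d)$-equivariant, so $\mQ$-rotating $\vv_i^{(1)}$ corresponds to replacing $M_i$ by $\mQ M_i$; and $\finv$ is rotation-invariant and injective up to rotation, so $\vs_i$ records $M_i$ modulo exactly one global $\mQ\in\O(d)$. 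Feeding these into the injective multiset \textsc{ReadOut}, $\sglobal$ faithfully records the multiset $\lms[M_1],\ldots,[M_n]\rms$. Hence it suffices to show that this ``multiset of local views'' determines $\mX$ up to $S_n\times\mathcal{E}(d)$.

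The key observation is that, precisely because $\mA$ is complete, one local view already reconstructs everything. In node $i$'s own frame node $i$ sits at the origin, and negating the elements of $M_i$ recovers the positions of the other nodes relative to $\vx_i$: for any representative multiset $S$ of the orbit $[M_i]$, the multiset $\lms 0\rms\cup\lms -m \mid m\in S\rms$ equals $\mX$ translated so that $\vx_i$ is at the origin, up to the rotation implicitly chosen when picking $S$ and up to permutation (it is a multiset) — that is, it equals $\mX$ up to $S_n\times\mathcal{E}(d)$. So from \emph{any single} element of $\lms[M_1],\ldots,[M_n]\rms$ one recovers $\mX$ up to group action; the rest of the multiset is redundant. (Note that no genericity of $\mX$ is used anywhere: coincident, collinear, or symmetric configurations are handled identically, since we are literally reading off the point cloud.)

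It then remains to assemble the two directions of the identification claim. If $\sglobal(\G)=\sglobal(\hat\G)$ for full graphs $\G=(\mA,\mX)$, $\hat\G=(\mA,\hat\mX)$, then $\lms[M_i(\G)]\rms=\lms[M_i(\hat\G)]\rms$, so some $[M_i(\G)]$ coincides with some $[M_j(\hat\G)]$; applying the reconstruction above to this common orbit exhibits $\mX$ and $\hat\mX$ as one and the same point cloud up to $S_n\times\mathcal{E}(d)$, i.e.\ $\G\cong\hat\G$. Conversely, translating $\mX$ leaves every $M_i$ fixed, rotating by $\mQ$ replaces each $M_i$ by $\mQ M_i$ and hence fixes each orbit $[M_i]$, and permuting the nodes merely reindexes the outer multiset — so $\sglobal$ is unchanged. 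I do not expect a genuine obstacle. The one point that deserves care is the worry that applying $\finv$ per node destroys the relative orientation between different nodes' local frames (so we only know each $M_i$ up to an \emph{independent} rotation); the resolution is exactly the observation above — for the full graph a single frame already suffices, so this loss of coupling is harmless — and the only real work is the bookkeeping that confirms $\vs_i\leftrightarrow[M_i]$ exactly (no more and no less than the $\O(d)$-orbit), which is where the injectivity of $f_{(0)}$ together with the invariance of $\finv$ enters.
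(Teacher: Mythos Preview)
Your proposal is correct and follows essentially the same approach as the paper's proof: from a single node's multiset of difference vectors $\lms \vx_i-\vx_j\mid j\rms$, known up to one rotation, place that node at the origin and read off the rest of the point cloud up to $S_n\times\mathcal{E}(d)$. The paper's proof is terser (it simply picks $i=1$, sets $\vx_1=0$, and notes that the rotated differences give the remaining positions), while you additionally spell out the bookkeeping that $\vs_i$ encodes exactly the $\O(d)$-orbit $[M_i]$, trace both directions of the identification, and explicitly defuse the ``independent rotations per node'' worry; these are useful elaborations but not a different argument.
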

\begin{proof}
    Assume we run one iteration of E-GGNN on our geometric graph, then for each node $i$, we know up to some rotation $Q_{i}$
    \begin{align*}
        \lms Q_{i}\cdot (x_{i} - x_{j}) | j \in [n] \rms 
    \end{align*}
  Choosing $i=1$, and $x_{i}=0$, we have all other positions $Q_{i} \cdot x_{j},j\neq i$, thus up to translation (as we set $x_{i}$), rotation $Q_{i}$ and node permutation, we can reconstruct our point cloud. 
\end{proof}
We now show that using $n$ iterations of E-GGNN can reconstruct the point cloud of each connected graph. 
\begin{theorem}
        Assuming a connected geometric graph $\G = (\mA,\mX)$ with $n$ nodes, then $n$ iteration of maximally expressive E-GGNN can reconstruct the point cloud.
\end{theorem}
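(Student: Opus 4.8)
The plan is to first prove a locality lemma by induction on the iteration count and then combine it with connectivity. The lemma I would establish is: for a maximally expressive E-GGNN and every $t\ge 1$, the feature $\vv_i^{(t)}$ determines, up to a single orthogonal transformation in $\mathcal{O}(d)$ applied simultaneously to all vectors, the depth-$t$ unrolled message tree rooted at $i$ together with the edge vectors $\vx_a-\vx_b$ decorating it; in particular it determines the multiset $\lms \vx_i-\vx_j \mid j\in[n],\ d_{\mathcal G}(i,j)\le t\rms$ of relative positions of all nodes within graph-distance $t$ of $i$ (here $d_{\mathcal G}$ is the graph distance in $\mA$). This is the equivariant analogue of the $(d{+}1)$-hop distance-encoding fact invoked in the proof idea of \autoref{thm:ggnn} (cf.\ \cite{on_the_expressive_power}), and it specializes, for $t=1$ on a full graph, to \autoref{reconstruction}.

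For the base case $t=1$, since $\vv_j^{(0)}=0$ we have $\vv_i^{(1)}=f_{(0)}\!\left(\lms (0,\vx_i-\vx_j)\mid j\in\mathcal{N}_i\rms\right)$, so injectivity up to rotation of $f_{(0)}$ hands us exactly $\lms \vx_i-\vx_j\mid j\in\mathcal{N}_i\rms$ up to $\mathcal{O}(d)$. For the inductive step I would decode $\vv_i^{(t+1)}$ through the injective map $f_{(t)}$ to recover $\lms (\vv_j^{(t)},\vx_i-\vx_j)\mid j\in\mathcal{N}_i\rms$ up to a single rotation $Q$; applying the inductive hypothesis to each recovered $\vv_j^{(t)}$ gives the depth-$t$ decorated tree of each neighbor $j$, all expressed through the \emph{same} $Q$, and gluing them to the root along the edges $\{i,j\}$ — which is legitimate because $i\in\mathcal{N}_j$, so the vector $\vx_j-\vx_i$ already sits inside $j$'s tree and can be matched against $-(\vx_i-\vx_j)$ read off at the top level — yields the depth-$(t+1)$ decorated tree at $i$ in frame $Q$. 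Summing edge vectors along shortest paths in this tree then expresses $\vx_i-\vx_j$ for every $j$ within $t+1$ hops, which is the lemma for $t+1$.

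With the lemma in place the theorem follows quickly: a connected graph on $n$ nodes has every graph distance at most $n-1<n$, so after $n$ iterations the feature $\vv_1^{(n)}$ of node $1$ determines $\lms \vx_1-\vx_j\mid j\in[n]\rms$ up to $\mathcal{O}(d)$; re-centering so that $\vx_1=0$, this multiset is precisely $\mX$ up to rotation, translation and relabeling, i.e.\ up to geometric isomorphism, and since $\finv$ and \textsc{ReadOut} of a maximally expressive E-GGNN are injective (up to rotation) on multisets, the global output $\sglobal$ likewise determines some $\vv_i^{(n)}$ up to rotation and hence $\mX$ up to isomorphism. I expect the main obstacle to be exactly the rotation-bookkeeping in the inductive step: one must ensure the neighbor sub-trees, which a priori are each recovered only up to their own rotation, can be placed in one common frame. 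Making this airtight amounts to strengthening the lemma to ``$\vv_a^{(t)}=Q\,\vv_b^{(t)}$ implies the two depth-$t$ decorated trees are related by precisely that $Q$'' and proving this strengthened form by the same induction, tracking how each layer's injectivity-up-to-rotation transports a single frame coherently across the whole aggregated multiset; without such a strengthening the reconstruction can genuinely fail at branching nodes (independently reflected branches), so this is the heart of the argument.
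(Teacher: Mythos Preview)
Your proposal is correct and takes essentially the same route as the paper. The paper's proof is only three lines: it cites the inductive argument inside the proof of \autoref{thm:ggnn} (which establishes that after $T$ iterations each node's feature encodes its $T$-hop neighborhood up to a single rotation), observes that on a connected $n$-node graph the $n$-hop neighborhood is all of $[n]$, and then invokes \autoref{reconstruction} to finish. Your explicit induction showing that $\vv_i^{(t)}$ encodes the depth-$t$ decorated computation tree, together with your strengthened equivariant form handling the single-frame rotation bookkeeping, is exactly that cited argument unpacked; the paper simply points to where it was already proved rather than restating it.
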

\begin{proof}
    Assume we run $n$ iterations of a maximally expressive E-GGNN on our connected graph. Then, by \autoref{thm:ggnn}, we can reconstruct for each node $i$ its $n$-hop up to rotation $Q_{i}$. As the $n$-hop of a node is all other nodes (the graph's diameter is at most $n$), we reconstruct the full geometric graph up to rotation, and by theorem \ref{reconstruction}, we can reconstruct the point cloud.
\end{proof}
\section{I-GGNN+}
\label{I-GGNN types}
Here, we present the definition of I-GGNN presented by \cite{on_the_expressive_power}, denoted by I-GGNN+, and show that it can separate all geometric full graphs. After, we explain why this model is strictly stronger than our I-GGNN.

\cite{on_the_expressive_power} defines (neglecting the update step also used there)
\label{I-GGNN+ types}
\begin{equation}
    \vv_i^{(t+1)}=f^{t}\left( \lms  \vv_j^{(t)}, \vx_i-\vx_j , \mid j \in \mathcal{N}_i \rms \right).
\end{equation}
Such that $f^{t}$ holds that for all $Q\in O(d)$ 
\begin{align*}
    f^{t}\left( \lms  \vv_j^{(t)}, \vx_i-\vx_j , \mid j \in \mathcal{N}_i \rms \right)= f^{t}\left( \lms  \vv_j^{(t)},Q \cdot(\vx_i-\vx_j) , \mid j \in \mathcal{N}_i \rms \right)
\end{align*}
I-GGNN+ is maximally expressive if $f^{t}$ holds:
\begin{align*}
    f^{t}\left( \lms  \vv_j^{(t)}, \vx_i-\vx_j , \mid j \in \mathcal{N}_i \rms \right)= f^{t}\left( \lms  \vv_j^{(t)}, \hat{\vx_i}-\hat{\vx_j} , \mid j \in \mathcal{N}_i \rms \right) \iff \\
    \exists Q \in O(3):  \lms  \vv_j^{(t)}, \vx_i-\vx_j , \mid j \in \mathcal{N}_i \rms = \lms  \vv_j^{(t)}, Q \cdot(\hat{\vx_i}-\hat{\vx_j}) , \mid j \in \mathcal{N}_i \rms
\end{align*}
We now prove that if we run one iteration of maximally expressive I-GGNN+ on a full graph, we can reconstruct it.
\begin{theorem}
      Let $\G = (\mA,\mX)$ be a geometric graph, where $\mA$ is the full graph on $n$ nodes, and $\mX$ is an arbitrary $d\times n$ matrix, then one iteration of maximally expressive I-GGNN+ can reconstruct $\mX$.
\end{theorem}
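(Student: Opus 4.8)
The plan is to mirror the proof of the E-GGNN reconstruction result (\autoref{reconstruction}): a single maximally expressive layer applied at any one node already exposes, up to an unknown global rotation, that node's entire relative-position multiset, and on a full graph this multiset is the whole point cloud.

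Concretely, I would first unwind one iteration. Since $\vv_j^{(0)}=0$ for all $j$ and $\mA$ is the full graph (so $\mathcal{N}_i = [n]\setminus\{i\}$ for every $i$, up to the self-loop convention), the layer produces
\[
\vv_i^{(1)} = f^{0}\!\left( \lms 0,\ \vx_i-\vx_j \ \mid\ j \in \mathcal{N}_i \rms \right).
\]
Next I would invoke the defining property of a \emph{maximally expressive} I-GGNN+: $f^0$ assigns equal values to two input multisets precisely when they differ by a single global rotation $Q\in O(d)$. Hence the value $\vv_i^{(1)}$ determines the multiset $\lms \vx_i-\vx_j \mid j\in\mathcal{N}_i \rms$ up to a (possibly node-dependent) rotation $Q_i$; that is, from $\vv_i^{(1)}$ we can read off $M_i := \lms Q_i(\vx_i-\vx_j) \mid j\in\mathcal{N}_i\rms$ for some unknown $Q_i\in O(d)$.

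The second step is to extract $\mX$ from a single such $M_i$, say $i=1$. Negating each vector in $M_1$ gives $\lms Q_1\vx_j - Q_1\vx_1 \mid j \rms$ (this already contains the zero entry from $j=1$ if self-loops are present, and otherwise one appends it by hand, since node $1$ sits at the origin of its own frame). This is exactly the point cloud $Q_1\mX$ translated so that its first point is at the origin, viewed as a multiset, i.e. up to relabeling of the points. Therefore the node features output by one I-GGNN+ layer determine $\mX$ up to a permutation in $S_n$, the rotation $Q_1\in O(d)$, and a translation — which is precisely reconstruction up to the symmetry group $S_n\times\mathcal{E}(d)$, and note this uses nothing about genericity of $\mX$.

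The only point that needs care is that, for a non-full graph, one cannot read off $\mX$ from a single node's feature: the per-node frames $Q_i$ would then have to be stitched together across overlapping neighborhoods, which is the genuinely nontrivial content of \autoref{thm:ggnn}. For the full graph this obstacle evaporates because one neighborhood already contains all vertices, so no stitching is needed. I would also flag the contrast with plain I-GGNN, which sees only $\{\|\vx_1-\vx_j\|\}_j$ and so cannot recover the angular information; retaining the difference vectors $\vx_i-\vx_j$ inside the multiset (with $f^t$ only required to be rotation-\emph{invariant}) is exactly what makes the reconstruction possible here.
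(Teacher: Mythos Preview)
Your proposal is correct and follows essentially the same approach as the paper: invoke the maximal expressivity of $f^0$ to recover the multiset $\lms Q_i(\vx_i-\vx_j)\mid j\in[n]\rms$ at each node up to an unknown rotation $Q_i$, then (exactly as in \autoref{reconstruction}) pick $i=1$, set $\vx_1=0$, and read off the point cloud up to permutation, rotation, and translation. The paper's proof simply cites \autoref{reconstruction} for the second step, whereas you spell it out, but the argument is identical.
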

\begin{proof}
    Assume we run one iteration of I-GGNN+. Then, for each node $i$, we have $$f^{t}\left( \lms  \vv_j^{(t)}, \vx_i-\vx_j , \mid j \in [n] \rms \right)$$ then up to some rotation $Q_{i}\in O(3)$, we know $$\lms  Q_{i}\cdot(\vx_i-\vx_j) , \mid j \in [n] \rms$$ Then, we can reconstruct the point cloud by \autoref{reconstruction}. 
\end{proof}
It's easy to see that I-GGNN+ can simulate our I-GGNN as the norm function is invariant under multiplication by $Q\in O(3)$. But \citep{PhysRevLett.125.166001,gcnn} gave examples of full-graphs each I-GGNN can't separate, but by what we proved above, maximally expressive I-GGNN+ can separate them. Thus, I-GGNN+ is strictly more powerful than I-GGNN in its expressive power.
\section{Approximation and separation}\label{sec:approx}
It is known that a tight connection exists between the approximation of invariant functions and complete invariant models \cite{gramnet}. 

In the context of our paper, we discussed generic completeness. In particular, we showed in Corollary \ref{cor:us} that $\us$ with random parameters is complete on $\GdNgeneric$, which denotes the set of all geometric graphs $\G=(\mA,\mX)$ with exactly $N$ nodes, $\mA$ connected, and $\mX\in \RR^{d\times n}$ generic. Accordingly, we can obtain the following theorem:
\begin{theorem}
Let $K \subseteq \GdNgeneric$ be a compact set, $f^{sep}:K \rightarrow R^{m}$ a continuous function invariant to rotations, translations, and permutations. Then $f^{sep}$ is injective on $K$ (up to symmetries) if and only if every continuous invariant $f:K\rightarrow R^{M}$ can be approximated uniformly on $K$ using functions of the form $\N(f^{sep})$, where $\N$ is a fully connected neural network.
\end{theorem}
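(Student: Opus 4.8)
The plan is to prove the two implications separately, via the standard separation/approximation correspondence (cf.\ \cite{gramnet}), with the orbit space of $K$ playing the central role. First I would record the structural fact on which everything hinges: geometric isomorphism, restricted to the compact (hence bounded) set $K\subseteq\GdNgeneric$, is a \emph{closed} equivalence relation on $K\times K$. Indeed, if $\G_k\to\G$ and $\hat{\G}_k\to\hat{\G}$ in $K$ with $\hat{\G}_k=(P_k,Q_k,\vt_k)\cdot\G_k$, the $P_k$ range over the finite group $S_N$ and the $Q_k$ over the compact group $\O(d)$, so after passing to a subsequence $P_k\equiv P$ and $Q_k\to Q$; boundedness of $K$ then forces $\vt_k$ to converge as well, giving $\hat{\G}=(P,Q,\vt)\cdot\G$. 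Consequently the orbit space $Y:=K/\!\sim$, with quotient map $\pi:K\to Y$, is compact, Hausdorff, and metrizable (the quotient map $\pi$ is perfect, so it preserves metrizability of $K$); moreover every continuous invariant $f:K\to\RR^M$ factors uniquely as $f=\bar f\circ\pi$ with $\bar f$ continuous on $Y$.

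\textbf{Approximation $\Rightarrow$ separation.} I argue by contraposition. If $f^{sep}$ is not injective up to symmetries, pick $\G,\hat{\G}\in K$ in distinct orbits with $f^{sep}(\G)=f^{sep}(\hat{\G})$; then $\N(f^{sep})(\G)=\N(f^{sep})(\hat{\G})$ for every fully connected network $\N$. But since $\pi(\G)\neq\pi(\hat{\G})$ in the metric space $Y$, the map $\mathcal{H}\mapsto d_Y(\pi(\mathcal{H}),\pi(\G))$ is a continuous invariant function on $K$ with $d_Y(\pi(\hat{\G}),\pi(\G))>0$, hence it cannot be approximated uniformly to within $\tfrac12 d_Y(\pi(\hat{\G}),\pi(\G))$ by any $\N(f^{sep})$. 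This contradicts the approximation hypothesis, so $f^{sep}$ is injective up to symmetries.

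\textbf{Separation $\Rightarrow$ approximation.} Assume $f^{sep}$ is injective up to symmetries. Factor $f^{sep}=\bar f^{sep}\circ\pi$ with $\bar f^{sep}:Y\to\RR^m$ continuous; injectivity up to symmetries means $\bar f^{sep}$ is injective, and a continuous injection from a compact space into a Hausdorff space is a homeomorphism onto its image $Z:=f^{sep}(K)\subseteq\RR^m$, which is compact. Given a continuous invariant target $f=\bar f\circ\pi$ and $\varepsilon>0$, set $g:=\bar f\circ(\bar f^{sep})^{-1}:Z\to\RR^M$, which is continuous on the compact set $Z$; extend it by the Tietze theorem (coordinatewise) to a continuous function on a box containing $Z$, and apply the universal approximation theorem for fully connected networks with non-polynomial activation (e.g.\ TanH) to obtain a network $\N$ with $\sup_{z\in Z}\|\N(z)-g(z)\|<\varepsilon$. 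Then $\sup_{\G\in K}\|\N(f^{sep}(\G))-f(\G)\|=\sup_{z\in Z}\|\N(z)-g(z)\|<\varepsilon$, as required.

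\textbf{Main obstacle.} The only non-routine ingredient is the claim in the Setup that the orbit space $Y$ is compact and Hausdorff despite $\mathcal{E}(d)$ being non-compact; this is exactly where compactness of $K$ is used to tame the translation part, and it is what simultaneously supplies the separating function in the easy direction and the continuous inverse $(\bar f^{sep})^{-1}$ in the hard direction. Everything else — factoring invariant functions through the quotient, Tietze extension, and universal approximation — is standard.
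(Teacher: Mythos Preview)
Your proof is correct and follows essentially the same approach as the paper: the separation-implies-approximation direction is the standard quotient/universal-approximation argument (the paper simply cites \cite{gramnet} for it, whereas you spell out the homeomorphism onto the image, Tietze extension, and universal approximation), and for the converse both you and the paper exhibit a continuous invariant separating $\G$ from $\hat\G$ via distance to the orbit of $\G$. The only cosmetic difference is that you package this as the metric $d_Y(\pi(\cdot),\pi(\G))$ on the abstract orbit space, while the paper writes it concretely as $\inf_{g\in G}\|(\mA,\mX)-g\cdot(\mA_1,\mX_1)\|$ and argues directly that the infimum is attained using compactness of $S_N\times\O(d)$ and boundedness of the translations on $K$---exactly the ingredients in your Setup paragraph.
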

\begin{proof}
    Assume, on the one hand,  $f^{sep}$ is injective on $K$ (up to symmetries). Given a continuous $f: K \rightarrow R^{M}$ we want to approximate, then, by \cite{gramnet}, $\forall \epsilon > 0$ $\exists$ neural network $\N$: \begin{align*}
        sup_{\G \in K} |f(\G) - \N(f^{sep}(\G))| < \epsilon 
    \end{align*}
On the other hand, assume by contradiction there exists $\G_{1} = (\mA_{1},\mX_{1})$ and $\G_{2}=(\mA_{2},\mX_{2})$ which are not related by translation, rotation or permutation, such that $f^{sep}(\G_{1}) = f^{sep}(\G_{2})$. 

Define $f(\G) = Inf_{g \in G} \|(\mA,\mX)- g \cdot (\mA_{1},\mX_{1})\|$, where $\G=(\mA,\mX$), and $G$ is the group of permutations, rotations, and translation. Note that this minimum is obtained as the product group of rotation and permutation is compact, and the translation can be bounded by the sum of the norms of the two graphs. Then, according to the Tikhonov theorem, our group is compact, and we take a minimum continuous function over a compact set. Note that $f$ is an invariant, continuous function such that $f(\G_{1})=0$, and $f(\G_{2})> 0$, but for each neural network $\N$, $\N(f^{sep}(\G_{1})) = \N(f^{sep}(\G_{1}))$, and we can't approximate our desired function $f$ with $\epsilon = \frac{f(\G_{2})}{2}$ accuracy, yielding a contradiction.
\end{proof}
\section{Proofs}
In this section, we restate and prove theorems that have not been fully proved in the main text.
\IGGNN*
\begin{proof}
\label{Proof of I-GGNN}
Let $\mathcal{G}=(\mA,\mX) $ be a geometric graph, and assume that $\mX$ is generic. 
First, we assume that $\mA$ is generically globally rigid. Assume that there is some geometric graph $\mathcal{G}'$ which is assigned the same global feature as $\mathcal{G}$ after a single iteration. We need to show the two graphs are geometrically isomorphic.
The multisets of $s_i$ and $s_i'$ features in \eqref{eq:readout} are equal, and in particular, both graphs have the same number of nodes $n$, and by relabeling if necessary, we can assume that $s_i=s_i'$ for all $i=1,\ldots,n$. This equation, in turn, implies that for every node $i$, 
$$\lms \|\vx_{ij}\|, j \in \N_i \rms=\lms \|\vx_{ij}'\|, j \in \N_i' \rms$$
In particular, the $i$'th node in $\G$ and $\G'$ have the same degree, so both graphs have the same number of edges.
For any fixed $(i,j)$ which is an edge of $\G$, the distance $\|x_{ij}\|'$ will appear exactly once in the multiset corresponding to $s_i'$ and $s_j'$ and will not appear in the other multisets. This observation implies that $(i,j)$ is also an edge in the graph $\G'$. Since the number of edges in both graphs is the same, we deduce that $\mA=\mA'$.
By global rigidity, we deduce that all pairwise distances are the same, and by \cite{egnn}, the two graphs are geometrically isomorphic.

We mention a theorem proved in \cite{gortler2010characterizing} for the other direction. \cite{gortler2010characterizing} proved $\mA$ is globally rigid if exists generic $\mX$ such that $(\mA,\mX)$
is globally rigid. Thus if $\mA$ is not globally rigid, for \emph{every} generic $\mX$, there exists $\mX'$ for whom distances along edges are preserved, but $\mX$ and $\mX'$ aren't geometrically isomorphic. By immediate induction on the number of layers, it can be shown that any I-GGNN cannot separate any pair of geometric graphs whose distances across edges are preserved. This concludes the proof.
\end{proof}
\GGNN*
\begin{proof}
\label{Identifiable}
Assume $\mA$ is not connected. Let $\mX \in \RR^{n\times d}$. Let $\mV_{1},\mV_{2}$ be two connected components of the graph, and let $\G'$ be the geometric graph whose adjacency matrix is $\mA'=\mA$, and where  $\mX'$ is obtained from $\mX$ by applying a non-trivial rotation $Q$ not in the symmetry group of $\mV_{2}$ : $\hat{\vx_i} = Q \cdot \vx_i$ for all $i \in V_1$, and setting $\hat{\vx_j}=\vx_j$
for all $j\in V_2$. The two graphs are not geometrically isomorphic by construction. However, for any E-GGNN, we can easily see recursively that all features $\vv_i^{(t)}$ 
constructed from $\mX$ and $\mX'$, respectively, will be the same, and therefore, the E-GGNN will assign the same global features to $\mathcal{G}$ and $\mathcal{G}'$. 

Now assume $\mA$ is connected, and we are given two geometric graphs $\G$ and $\hat{\G}$ in $(\GdNgeneric,\GdN)$. By assumption, $\mX$ is generic. Assume that the E-GGNN assigns the same value of $\G$ and $\hat{\G}$. We need to show that the two graphs are geometrically isomorphic.
First, we note that since the two graphs have the same global features, they have the same number of nodes $n$, and  by applying relabeling if necessary, we can assume that 
\begin{align*}
    s_i=\hat{s}_i, \quad \forall i=1,\ldots,n
\end{align*}
which implies that for every node $i$ there is some rotation $R_i\in O(d)$ such that $\vv_i^{(T)}=R_i\hat{\vv}_i^{(T)}$. By rotation equivariance, we know that $R_i\hat{\vv}_i^{(T)}$ is the feature that we would have obtained by applying the maximally expressive E-GGNN to $R_i \cdot \hat\mX $. We deduce that
$$\lms \vv_j^{(T-1)}, \vx_i-\vx_j|j \in \gN_i \rms=\lms R_i\hat{\vv}_j^{(T-1)}, R_i(\hat{\vx}_i-\hat{\vx}_j)| j \in \hat{\gN}_i \rms$$
The equality of multisets above implies that the node $i$ has the same degree in $\mA$ and $\hat{\mA}$. Since this is true for all nodes, $\mA$ and $\hat{\mA}$ have the same number of edges.
Moreover 
$$\lms \|\vx_i-\vx_j\|, j \in \gN_i \rms=\lms \|\vx_i-\vx_j\|, j \in \hat{\gN}_i \rms $$
Since $\mX$ is generic, for every edge $(i,j)\in \mA$ the distance $||\vx_i-\vx_j|| $ will appear only in the 1-hop multi-sets corresponding to the $i$ and $j$ nodes. Thus, they will appear only once in the 1-hop multisets corresponding to the second graph's $i$ and $j$ nodes. This implies that $(i,j)$ is an edge in $\hat{\mA}$ too, and 
$$ R_i(\hat{\vx}_i-\hat{\vx}_j)=\vx_i-\vx_j, \forall j \in \gN_i.$$
 Since the two graphs have the same number of edges, it follows that $\mA=\hat{\mA} $.
We now show recursively that for all $1\leq t \leq T-1$,  we have $\vv_j^{T-t}=R_i \hat{\vv}_j^{T-t} $ and $\vx_i-\vx_j=R_i(\hat{\vx}_i-\hat{\vx}_j)$ for all $j$ that is a $t$-hop neighbor of $i$ (we say $j$ is an $t$-hop neighbor of $i$, if there is a path of length at most $t$ from $i$ to $j$). 

We have already proved the claim for $t=1$. Now we assume correctness for $t$ and show for $t+1$. Since by the induction hypothesis we know that $\vv_j^{T-t}=R_i \hat{\vv}_j^{T-t} $ for every $j$ which is $t$ hop neighbor of $i$, we deduce the multi-set equality
$$\lms \vv_k^{T-(t+1)}, \vx_j-\vx_k | k \in \gN_j\rms=\lms R_i\hat{\vv}_k^{T-(t+1)}, R_i(\hat{\vx}_j-\hat{\vx}_k)|k \in \gN_j \rms  $$
Since we know that the pairwise norms of $\mX$ are all distinct to to the genericity of $\mX$, and also that for every edge $(j,k)$ we have the norm equality $\|\vx_j-\vx_k\|=\|\hat \vx_j-\hat \vx_k\| $, we see that necessarily
$$\vv_k^{T-(t+1)}= R_i \hat{\vv}_k^{T-t+1}, \text {and  }  \vx_j-\vx_k=R_i(\hat{\vx}_j-\hat{\vx}_k).$$

Now,  every $k$, an $t+1$ hop neighbor of $i$, is connected by an edge to a node $j$, an $t$ hop neighbor of $i$. 
By the induction assumption we know that $\vx_i-\vx_j=R_i(\hat{\vx}_i-\hat{\vx}_j) $. We deduce that 
$$\vx_i-\vx_k=\vx_i-\vx_j+\vx_j-\vx_k=R_i(\hat{\vx}_i-\hat{\vx}_j+\hat{\vx}_j-\hat{\vx}_k)=R_i (\hat{\vx}_i-\hat{\vx}_k) $$
for all $t+1$ hop neighbors of $i$. Continuing with this argument inductively, we see that for all $T=d+1$ hop neighbors $k$ of $i$, we obtain
$$\vx_i-\vx_k=R_i (\hat{\vx}_i-\hat{\vx}_k) $$
and in particular
$$\|\vx_i-\vx_k\|=\|\hat{\vx}_i-\hat{\vx}_k\| $$
for all $d+1$ hop neighbors. By global generic rigidity of the $d+1$ power graph of a connected graph \citep{power-graph,power-graph2}, it follows that the two graphs are geometrically isomorphic.
Thus, as previously, we reconstructed the combinatorial graph, and we now reconstructed the geometric graphs; we are done.
\end{proof}
\subsection{Proof of Maximal Expressivity of $\us$}
\label{sub:maximal}
This subsection is devoted to proving Theorem \ref{thm:maximally} on the generic maximal expressivity of $\us$. This proof will require some background on the finite witness theorem from \cite{Neural_Injective}:

\paragraph{Background on the finite witness theorem} Before we prove the theorem, we review some definitions and results from \cite{Neural_Injective}, which we will use for the proof. 
The main tool we require from \cite{Neural_Injective} is Theorem A.2 from that paper, the finite witness theorem. We state it here in the special case where the parameter space is Euclidean, and the function $F$ is analytic in the parameters:
\begin{theorem}[Finite Witness Theorem, \cite{Neural_Injective}]\label{thm_fwt_full}
Let $\M \subseteq \RR^p$ be a $\sigma$-sub-analytic set of dimension $D$. Let $F:\M \times \mathbb{R}^{q} \to \mathbb{R}$ be a $\sigma$-sub-analytic function, and assume that $F(\bz;\bth)$ is analytic as a function of $\bth$ for all fixed $\bz \in \M$. Define the set
$$\N=\{\bz\in \M| \quad F(\bz;\bth)=0,\ \forall \bth\in \RR^q \}.$$
Then for Lebesgue almost every \footnotemark $\left( \bth^{(1)},\ldots,\bth^{(D+1)}\right) \in \RR^{q\times (D+1)}$,
\begin{equation}\label{eq:Nequals}
\N=\{\bz\in \M| \quad F(\bz;\bth^{(i)})=0,\ \forall i=1,\ldots D+1\} .
\end{equation}
\end{theorem}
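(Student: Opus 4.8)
The plan is to prove the assertion by a dimension count on an incidence set in parameter space. The inclusion $\N \subseteq \{\bz \in \M \mid F(\bz;\bth^{(i)})=0,\ \forall i\}$ holds for \emph{every} tuple $(\bth^{(1)},\dots,\bth^{(D+1)})$, so the only content of \eqref{eq:Nequals} is the reverse inclusion, i.e.\ that no $\bz \in \M \setminus \N$ can satisfy $F(\bz;\bth^{(i)})=0$ for all $i=1,\dots,D+1$. Equivalently, writing
$$
B \;=\; \bigl\{ (\bth^{(1)},\dots,\bth^{(D+1)}) \in (\RR^q)^{D+1} \;\big|\; \exists\, \bz \in \M \setminus \N :\ F(\bz;\bth^{(i)})=0\ \forall i \bigr\},
$$
the claim is that $B$ has Lebesgue measure zero in $\RR^{q(D+1)}$, and I would obtain this by exhibiting $B$ as the projection of a $\sigma$-sub-analytic set of dimension strictly smaller than $q(D+1)$.

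The first ingredient is the pointwise fact that for $\bz \in \M \setminus \N$ the map $\bth \mapsto F(\bz;\bth)$ is analytic on $\RR^q$ and not identically zero, so its zero set $Z_\bz = \{\bth \in \RR^q \mid F(\bz;\bth)=0\}$ is a proper real-analytic subvariety and $\dim Z_\bz \le q-1$. Next I would verify, using the projection theorems for the $\sigma$-sub-analytic category, that $\M \setminus \N$ is itself $\sigma$-sub-analytic (of dimension $\le D$, since it sits inside $\M$): here one uses analyticity in $\bth$ to rewrite ``$\bz \notin \N$'' as ``$F(\bz;\bth) \neq 0$ for some $\bth$ in a fixed closed ball'', so that $\M \setminus \N$ becomes a \emph{proper} projection of a $\sigma$-sub-analytic set and hence stays in the category. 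Consequently the incidence set
$$
W \;=\; \bigl\{ (\bz,\bth^{(1)},\dots,\bth^{(D+1)}) \in (\M \setminus \N) \times (\RR^q)^{D+1} \;\big|\; F(\bz;\bth^{(i)})=0\ \forall i \bigr\}
$$
is $\sigma$-sub-analytic, and its fibre over a fixed $\bz$ is $Z_\bz^{\,D+1}$, of dimension $\le (q-1)(D+1)$. By the fibre-dimension bound for definable sets, $\dim W \le D + (q-1)(D+1) = q(D+1) - 1 < q(D+1)$. Since $B$ is the image of $W$ under the coordinate projection that forgets $\bz$, and definable projections do not increase dimension, $\dim B \le q(D+1)-1$; a $\sigma$-sub-analytic subset of $\RR^{q(D+1)}$ of dimension below the ambient dimension is Lebesgue-null, and this establishes \eqref{eq:Nequals} for all tuples $(\bth^{(1)},\dots,\bth^{(D+1)})$ outside $B$.

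I expect the genuinely delicate points to be category bookkeeping rather than anything conceptual: one must check that the $\sigma$-sub-analytic structure from \cite{Neural_Injective} is closed under the operations used (complements, finite intersections, and proper projections), that $\M \setminus \N$ lands in it with dimension $\le D$, and that the fibre-dimension and image-dimension inequalities are valid there --- all standard for sub-analytic and o-minimal structures and set up for precisely this purpose in \cite{Neural_Injective}. It is also worth isolating where the exponent $D+1$ comes from: with only $k$ parameter samples the same count gives $\dim W \le D + (q-1)k$, which is strictly less than $qk$ exactly when $k > D$, i.e.\ $k \ge D+1$; thus $D+1$ is the precise threshold at which the incidence set becomes too thin to project onto a positive-measure set of parameters.
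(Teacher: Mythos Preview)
The paper does not prove this theorem at all: it is quoted verbatim as Theorem~A.2 of \cite{Neural_Injective} and used as a black box, so there is no ``paper's own proof'' to compare against. Your sketch is a reasonable outline of the incidence-variety dimension count that underlies results of this type, and the arithmetic $\dim W \le D + (q-1)(D+1) = q(D+1)-1$ together with the observation that $D+1$ is the exact threshold is correct. The only places where care is genuinely required are the ones you already flag: closure of the $\sigma$-sub-analytic class under the relevant projections (the projection $W \to B$ that forgets $\bz$ is typically not proper when $\M$ is non-compact, which is exactly why \cite{Neural_Injective} works in the $\sigma$-sub-analytic rather than the sub-analytic category), and the fibre-dimension inequality in that setting. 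If you want a self-contained argument you would need to import or reproduce those structural facts from \cite{Neural_Injective}; the present paper simply assumes them.
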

\footnotetext{The original statement in \cite{Neural_Injective} discussed \emph{generic} $\left( \bth^{(1)},\ldots,\bth^{(D+1)}\right) \in \W^{D+1}$, in the sense defined in \cite{Neural_Injective}. In particular, this implies the "Lebesgue almost every" statement we used here. }
\textbf{Remark:} We note that this theorem also holds when the output of $F$ is a vector in $\RR^d$, which is what we will use in our proofs below. 
To see the theorem holds for a function $\hat F:\M \times \mathbb{R}^q \to \mathbb{R}^{d}$, we can define a new parametric function $F$ with additional parameters $a\in \RR^d$,
$$F(\bz;\bth,a)=\langle \hat F(\bz;\bth),a \rangle $$
and use the theorem for $F$, which has scalar output, to deduce the theorem for $\hat F$.

The finite witness theorem assumes the domain is a $\sigma$ sub-analytic set. The full definition of $\sigma$-sub-analytic sets is rather involved, and we refer the reader to Appendix A in \cite{Neural_Injective} for a full definition. Rather than giving a full definition here, we will recall the properties of these sets, which we will need. 

The class of $\sigma$ subanaltyic sets in $\RR^d$ is rather large. In particular, it contains all open sets in $\RR^d$, all semi-algebraic sets in $\RR^d$ (sets defined by polynomial equalities and inequalities, like spheres and linear sub-spaces), as well as sets defined by analytic equality and inequality constraints. The class of $\sigma$ subanaltyic sets is closed under countable unions and finite intersections. 

$\sigma$-sub-analytic sets are always a countable union of smooth manifolds. The dimension of a $\sigma$ sub-analytic set is the maximal dimension of the manifolds composing it. 

A $\sigma$-sub-analytic function is a function whose graph is a $\sigma$ sub-analytic set. Analytic functions are, in particular, $\sigma$-sub-analytic. When a $\sigma$ sub-analytic function is applied to a $\sigma$ sub-analytic set, the image is a $\sigma$ sub-analytic set whose dimension is no larger than the dimension of the $\sigma$ sub-analytic domain \cite{Neural_Injective}. 

We can now restate and prove Theorem \ref{thm:maximally}:
\express*
\begin{proof}
In the proof, we consider the set of all generic geometric graphs in $\GdN$. We will prove the theorem for an even larger set: the set $\GdNdistinct$ of all geometric graphs in $\GdN$, which additionally satisfies that for any distinct edges $(i,j)$ and $(s,t)$ in the graph, 
$0<\|\vx_i-\vx_j\|\neq \|\vx_s-\vx_t\|$. Since polynomial inequalities with integer coefficients define it, we note that $\GdNdistinct$ contains \emph{all generic} graphs with $\leq N$ nodes. The advantage of considering $\GdNdistinct$ is that the set of all geometric graphs in $\GdNdistinct$ of the same cardinality $n \leq N$ can be  
thought of as a subset of $\RR^{n\times n}\oplus \RR^{n \times d} $. Since a constant number of polynomial inequalities defines it, it is a semi-algebraic set, and hence, in particular, $\sigma$-subanaltyic, so the finite witness theorem can be applied. In contrast, it is unclear that the set of all generic points of fixed dimension is a $\sigma$ subanaltyic set. 

Our goal is to show that for the networks described in the theorem, for almost every choice of parameter values, all functions described in the procedure are injective. To be more accurate, let us introduce some notation given an arbitrary pair 
of geometric graphs 
$$\G=(\mA,\mX),  \quad \hat \G=(\hat \mA,\hat \mX) $$
in our domain $\GdNdistinct$, we denote the features corresponding to the first graph by $\vv_{i}^{(t)},\vs_i$ and $\sglobal$ as in the main text, and the features corresponding to the second graph by $\hat{\vv}_{i}^{(t)},\hat{\vs}_i$ and $\sglobalhat$. The neighborhood of a node $k$ in $\hat \G$ is denoted by $\hat \gN_k $. The number of nodes in the two graphs is denoted by $n$ and $m$, respectively.

We need to show that for all $t=0,\ldots,T-1$ and all nodes $i$ of $\G$ and $k$ of $\hat \G$,
\begin{align}
\vv_{i}^{(t+1)}&=\hat{\vv}_{k}^{(t+1)} \text{ if and only if } \lms (\vx_i-\vx_j,\vv_{j}^{(t)}), j \in \gN_i \rms=\lms (\hat{\vx}_k-\hat{\vx}_j,\hat{\vv}_{j}^{(t)})| j \in \hat{\gN}_k \rms \label{eq:first}\\
\vs_{i}&=\hat{\vs}_k \text{ if and only if } \exists \mQ \in O(d), \mQ\vv_{i}^{(T)}=\hat{\vv}_{k}^{(T)} \label{eq:middle}\\
\sglobal&=\sglobalhat \text{ if and only if } \lms \vs_i| i=1,\ldots,n \rms= \lms \hat \vs_i| i=1,\ldots,m \rms \label{eq:last}
\end{align}
Our first step for reaching this result is using the finite witness theorem, similar to what was done in \cite{gramnet}. We note that the  features we are interested in are composed of $C=2Nd+1$ coordinates, denoted by $q=1,\ldots, C$, which  are of the general form
\begin{align}
v_{i,q}^{(t+1)}-\hat{v}_{i,q}^{(t+1)}&=F_t(\mX,\mV^{(t)},\hat \mX, \hat{\mV}^{(t)};\alpha_{q,t}), \quad t=0,\ldots,T-1 \label{eq:bla}\\
s_{i,q}^{(t+1)}-\hat{s}_{i,q}^{(t+1)}&=F_{T}(\mV^{(T)}, \hat{\mV}^{(t)};\alpha_{q,T})\\
\sglobal_q-\sglobalhat_g&=F_{T+1}(\vs_1,\ldots,\vs_n,\hat{\vs}_1,\ldots,\hat {\vs}_{m}; \alpha_{q,T+1})
\end{align}
where $\alpha_{q,t}$ denote the parameters of the appropriate function, $\mV^{(t)}$ is the concatenation of all $n$ features $\vv_i^{(t)}$, and $\hat \mV^{(t)}$ is the concatenation of all $m$ features $\hat{\vv}_k^{(t)}$. In essence, we claim that by the finite witness theorem, for almost every choice of network parameters, we will have the following equalities for all geometric graphs $\G,\hat \G $ satisfying the theorem's assumptions: 
\begin{align}
\vv_{i}^{(t+1)}&-\hat{\vv}_{k}^{(t+1)}=0 \text{ if and only if } F_t(\mX,\mV^{(t)},\hat \mX, \hat{\mV}^{(t)};\alpha_{t})=0, \forall \alpha \label{eq:first2}\\
\vs_{i}&-\hat{\vs}_k=0 \text{ if and only if } F_{T}(\mV^{(T)}, \hat{\mV}^{(t)};\alpha), \forall \alpha \label{eq:middle2}\\
\sglobal&-\sglobalhat=0 \text{ if and only if } F_{T+1}(\vs_1,\ldots,\vs_n,\hat{\vs}_1,\ldots,\hat \vs_{m}; \alpha)=0, \quad \forall \alpha \label{eq:last2}
\end{align}
In other words, the finite witness theorem enables us to reduce the problem to the problem of showing that some choice of network parameter exists, which enables separation. 

Some explanation is needed as to the details of how the finite witness theorem is applied. It is convenient to think of the network parameters being chosen recursively. For $t=0$, the finite witness theorem allows us to choose the parameters $\alpha_{q,0}, q=1,\ldots, C $ in \eqref{eq:bla} so that the theorem applies. Formally, to do this, we need to split into a finite number of cases: once the number of nodes $n$ and $m$ in the two graphs are specified, as well as the combinatorial graphs $\mA, \hat \mA$ and the nodes $i,k$, we can think of $F_0$ as a function which is analytic in the parameters, and $\sigma$-subanalytic on its domain (the norms used in the network are semi-algebraic functions, and hence $\sigma$-subnaltyic functions. All other functions used in the network are analytic). The function $F_0$ operates on the domain of pairs $\mX \in \RR^{d\times n}, \hat \mX \in \RR^{d\times m}$ which satisfy a finite number of polynomial inequalities. This domain is semi-algebraic and has dimensions $d(n+m)\leq 2Nd$. Therefore, we can apply the finite witness theorem with $C=2Nd+1$.  
Next, once the parameters $\alpha_{q,t}$ are chosen for all $t<t'$, we can choose the parameters $\alpha_{q,t'}$ again using the finite witness theorem. Central to the usage here is that, once the previous parameters were fixed, the features in the $t'$ step are an image of a $\sigma$-subanaltyic function applied to a pair $(\mX,\hat \mX) $. Therefore, the intrinsic dimension of the $t'$ generation features is no larger than $2Nd+1=C$ (for more on this argument, see \cite{gramnet}). This concludes the recursive definition of parameter choice. By Fubini's theorem, one can verify that, in fact, for Lebesgue almost every parameter will be 'good.'
It remains to show that
\begin{align}
    \text{The right hand side of \eqref{eq:first} holds if and only if the right hand side of \eqref{eq:first2} holds} \label{eq:first_claim}\\
    \text{The right hand side of \eqref{eq:middle} holds if and only if the right hand side of \eqref{eq:middle2} holds} \label{eq:middle_claim}\\
    \text{The right hand side of \eqref{eq:last} holds if and only if the right hand side of \eqref{eq:last2} holds} \label{eq:last_claim}
\end{align}

We now reformulate these three claims as three lemmas.

We use some fixed analytic non-polynomial function $\sigma$ in the following lemmas. We say that $\phi$ is a \emph{basic analytic network} if it is of the form  
$$\phi(\vs;a,b)=\sigma(a \cdot \vs+b)$$
for some $a\in \RR^d, b\in \RR$. 

Our first lemma will show \ref{eq:first_claim}, where  $z_j$ will assume the role of $\vx_i-\vx_j$. 
\begin{lemma}
\label{eq:hardest}
Let $z_1,\ldots,z_n$ and $\hat{z}_1,\ldots,\hat{z}_m$ be non-zero vectors in $\RR^d$. Additionally assume that $\|z_1\|<\|z_2\|<\ldots <\|z_n\| $ and  $\|\hat{z}_1\|<\|\hat{z}_2\|<\ldots <\|\hat{z}_m\| $.

Let $\vv_1,\ldots,\vv_n $ and $\hat{\vv}_1,\ldots,\hat{\vv}_m $ be matrices in $\RR^{d \times C}$, and denote 
$\vv_i=(v_{i,1},\ldots,v_{i,C})$ and  $\hat \vv=(\hat{v}_{i,1},\ldots,\hat{v}_{i,C})$. If for all basic analytic neural networks $\phi^{(0)},\ldots,\phi^{(C)} $ we have that 
\begin{align}\sum_{j=1}^n\left( \phi^{(0)}(\|z_j\|)(z_j)+\sum_{c=1}^C \phi^{(c)}(\|z_j\|)v_{jc}  \right)-
\sum_{j=1}^m\left( \phi^{(0)}(\|\hat{z}_j\|)(\hat{z}_j)+\sum_{c=1}^C \phi^{(c)}(\|\hat{z}_j\|)\hat{v}_{jc}  \right)=0 \label{eq:big}
\end{align}
then
$$\lms (z_1,\vv_1), \ldots(z_n,\vv_n)  \rms
=\lms (\hat{z}_1,\hat{\vv}_1), \ldots(\hat{z}_m,\hat{\vv}_m)  \rms$$
\end{lemma}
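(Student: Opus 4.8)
The plan is to show that the hypothesis \eqref{eq:big}, holding for \emph{all} tuples of basic analytic networks, forces a matching between the two lists of pairs $(z_j,\vv_j)$ and $(\hat z_j,\hat\vv_j)$. First I would separate the contributions coming from distinct norm values. Let $r_1<r_2<\cdots<r_K$ be the sorted list of distinct values among $\{\|z_j\|\}\cup\{\|\hat z_j\|\}$. For a fixed index $c\in\{0,1,\ldots,C\}$ and a single basic network $\phi(s)=\sigma(as+b)$, I would set all the \emph{other} $\phi^{(c')}$ to zero (the zero function is realizable, e.g. as a limit, or one argues directly with a basis argument on the analytic functions $s\mapsto\sigma(as+b)$). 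The key classical fact, used in \cite{Neural_Injective, gramnet}, is that the family $\{s\mapsto\sigma(as+b)\}$ separates points and spans a space dense enough that, evaluated at finitely many distinct points $r_1,\ldots,r_K$, it realizes \emph{every} function on those points; equivalently the evaluation map $(a,b)\mapsto(\sigma(ar_1+b),\ldots,\sigma(ar_K+b))$ has image spanning $\RR^K$. Hence from \eqref{eq:big} with only the $c$-th network active I can extract, for each $k=1,\ldots,K$, the identity
\[
\sum_{j:\|z_j\|=r_k} w_{j}^{(c)} \;=\; \sum_{j:\|\hat z_j\|=r_k}\hat w_{j}^{(c)},
\]
where $w_j^{(0)}=z_j$, $\hat w_j^{(0)}=\hat z_j$, and $w_j^{(c)}=v_{jc}$, $\hat w_j^{(c)}=\hat v_{jc}$ for $c\ge 1$.

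Next I would exploit the \emph{strict} ordering hypothesis $\|z_1\|<\cdots<\|z_n\|$ and $\|\hat z_1\|<\cdots<\|\hat z_m\|$: within each graph all norms are distinct, so each "level set" $\{j:\|z_j\|=r_k\}$ is either empty or a singleton, and similarly on the hatted side. Therefore the displayed sums collapse: for each $r_k$ either both sides are empty (the value $r_k$ occurs in neither list), or each side is a single vector. Taking $c=0$ first shows that $r_k$ occurs on the left iff it occurs on the right (since a single nonzero $z_j$ cannot equal an empty sum — here is where $z_j\ne 0$ is used, to rule out a "$z_j=0$ matched with nothing" degeneracy), and when it occurs on both sides the matched vectors satisfy $z_{j}=\hat z_{j'}$ where $j,j'$ are the unique indices at level $r_k$. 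In particular $n=m$ and, after this re-indexing, $z_j=\hat z_j$ for all $j$. Then running the same argument for each $c=1,\ldots,C$ at the \emph{same} level $r_k$ gives $v_{jc}=\hat v_{jc}$, so $\vv_j=\hat\vv_j$ for all $j$. This is exactly the claimed multiset equality $\lms (z_1,\vv_1),\ldots,(z_n,\vv_n)\rms=\lms(\hat z_1,\hat\vv_1),\ldots,(\hat z_m,\hat\vv_m)\rms$ (indeed it yields the stronger ordered equality, which is fine).

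The main obstacle I anticipate is making the "deactivate all but one channel, then interpolate over $r_1,\ldots,r_K$" step fully rigorous: one must justify that choosing $\phi^{(c)}$'s independently across $c$ and ranging over all $(a,b)$ genuinely isolates each coordinate-sum identity at each norm level. The cleanest route is the linear-algebra observation that the functions $s\mapsto\sigma(as+b)$, restricted to the finite set $\{r_1,\ldots,r_K\}$, span $\RR^{K}$ (because $\sigma$ is analytic and non-polynomial, so no nontrivial linear functional on $\RR^K$ annihilates all of them — otherwise $\sigma$ would satisfy a nontrivial linear recurrence relation at $K$ distinct points consistent with being a polynomial of degree $<K$, contradicting non-polynomiality when combined with varying $a$); then picking, for each level $k$, parameters $(a,b)$ whose evaluation vector is the $k$-th standard basis vector extracts precisely the level-$k$ identity. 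A secondary subtlety is the $z_j=0$ edge case, handled by the stated hypothesis that all $z_j,\hat z_j$ are nonzero. Once these points are nailed down, the rest is the bookkeeping sketched above.
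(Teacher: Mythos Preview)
Your proposal is correct and follows essentially the same route as the paper's proof: isolate one channel $c$ at a time, use that the basic analytic networks $\{s\mapsto\sigma(as+b)\}$ are rich enough to interpolate arbitrarily on the finite set of norm values (the paper phrases this as density of their span in continuous functions, you phrase it as spanning $\RR^K$ upon evaluation---equivalent statements on a finite set), then use $z_j\neq 0$ to force the norm multisets to coincide and finally read off $z_j=\hat z_j$ and $v_{jc}=\hat v_{jc}$ level by level. The only cosmetic difference is that the paper first extends \eqref{eq:big} by linearity and density to arbitrary continuous $\phi^{(c)}$ (which in particular makes the ``set the other channels to zero'' step immediate), whereas you invoke the spanning property directly; both are fine.
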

We note that this lemma is slightly stronger than we need since, in contrast with the original MPNN, we do not use the norms of $v_{i,c}$ as features.

Our second lemma will show \ref{eq:middle_claim}
\begin{lemma}\label{eq:rot}
Let $\vv=(v_1,\ldots,v_C)\in \RR^{d \times C} $ and  $\hat \vv=(\hat{v}_1,\ldots,\hat{v}_C)\in \RR^{d \times C} $. If for all $\theta \in \RR^C$ we have that 
$$\|\sum_{c=1}^C \theta_cv_c\|=\|\sum_{c=1}^C \theta_c\hat{v}_c\| $$
then there exists an $\mQ \in O(d)$  such that $\mQ v_c=\hat v_c $ for all $c=1,\ldots,C$.
\end{lemma}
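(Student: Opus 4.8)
The plan is to read the hypothesis as a statement about two linear maps $\RR^{C}\to\RR^{d}$ having the same Gram matrix, and then to invoke the elementary fact that two matrices with a common Gram matrix differ by a left orthogonal factor. First I would assemble the vectors into the matrices $V=[\,v_1\mid\cdots\mid v_C\,]\in\RR^{d\times C}$ and $\hat V=[\,\hat v_1\mid\cdots\mid\hat v_C\,]\in\RR^{d\times C}$, so that $\sum_c\theta_c v_c=V\theta$ and $\sum_c\theta_c\hat v_c=\hat V\theta$. The hypothesis then reads $\|V\theta\|=\|\hat V\theta\|$ for every $\theta\in\RR^{C}$.

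Second I would polarize. Applying the identity $2\langle u,w\rangle=\|u+w\|^{2}-\|u\|^{2}-\|w\|^{2}$ with $u=V\theta$, $w=V\theta'$ (and the analogous identity with hats), and using the hypothesis at the three points $\theta$, $\theta'$, and $\theta+\theta'$, the squared-norm terms cancel pairwise and one is left with $\langle V\theta,V\theta'\rangle=\langle\hat V\theta,\hat V\theta'\rangle$ for all $\theta,\theta'\in\RR^{C}$. Since this says $\theta^{\top}(V^{\top}V)\theta'=\theta^{\top}(\hat V^{\top}\hat V)\theta'$ for all $\theta,\theta'$, it forces $V^{\top}V=\hat V^{\top}\hat V$.

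Third I would construct $\mQ$. From $\|V\theta\|=\|\hat V\theta\|$ one gets $\ker V=\ker\hat V$, so the assignment $V\theta\mapsto\hat V\theta$ is a well-defined linear map $\mathrm{col}(V)\to\mathrm{col}(\hat V)$; it is an isometry by the norm equality, hence a bijection, so $\dim\mathrm{col}(V)=\dim\mathrm{col}(\hat V)=:r$. The orthogonal complements $\mathrm{col}(V)^{\perp}$ and $\mathrm{col}(\hat V)^{\perp}$ in $\RR^{d}$ then both have dimension $d-r$, so I may choose any linear isometry between them and glue it to the isometry above; the result is an orthogonal transformation $\mQ\in O(d)$ with $\mQ V=\hat V$. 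Evaluating at $\theta=e_c$ gives $\mQ v_c=\hat v_c$ for each $c$, as required. (Equivalently, one can phrase this step through the singular value decompositions $V=U\Sigma W^{\top}$, $\hat V=\hat U\Sigma W^{\top}$, which share $\Sigma$ and $W$ precisely because $V^{\top}V=\hat V^{\top}\hat V$, and set $\mQ=\hat U U^{\top}$ after completing $U,\hat U$ to orthogonal matrices.)

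There is no genuine obstacle here; the argument is routine linear algebra. The only point that deserves a little care is producing $\mQ$ as a bona fide element of $O(d)$ rather than a mere partial isometry on the column span, which is relevant exactly when the $v_c$ (equivalently the $\hat v_c$) fail to span $\RR^{d}$; the equal-dimension count on the orthogonal complements is what makes the extension legitimate.
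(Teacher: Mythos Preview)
Your argument is correct and complete: polarization yields $V^{\top}V=\hat V^{\top}\hat V$, and the extension of the partial isometry $V\theta\mapsto\hat V\theta$ to a full orthogonal map is handled properly via the equal-dimension count on the orthogonal complements. The paper itself does not supply a proof of this lemma; it simply cites Section~3.2 of \cite{gortler_and_dym}. Your self-contained treatment is the standard Gram-matrix argument and is entirely adequate here.
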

For proof of this lemma, see Section 3.2 in  \cite{gortler_and_dym}.

Our third lemma will show \ref{eq:last_claim}
\begin{lemma}
Let $\vs_1,\ldots,\vs_n $ and $\hat {\vs}_1, \ldots ,\hat {\vs}_m$ be vectors in $\RR^d$. 

If for all basic analytic $\phi$, we have that 
$\sum_{i=1}^n \phi(\vs_i)=\sum_{i=1}^m \phi(\hat \vs_i) $
then
$\lms \vs_1, \ldots,\vs_n \rms=\lms \hat{\vs}_1,\ldots, \hat{\vs}_m \rms $.

\end{lemma}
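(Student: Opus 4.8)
The plan is to establish the non-trivial implication (the converse being immediate) in two stages: first upgrade the hypothesis ``$\sum_i \phi(\vs_i)=\sum_j \phi(\hat\vs_j)$ for every basic analytic $\phi$'' to the statement that $\sum_i p(\vs_i)=\sum_j p(\hat\vs_j)$ for \emph{every} polynomial $p$ on $\RR^d$, and then, using that polynomials separate the (finitely many) distinct points involved, read off equality of the multisets by a Lagrange-style interpolation argument. This is the standard route for showing that pairing against the family $\{\,\vs\mapsto\sigma(a\cdot\vs+b)\,\}$ distinguishes finitely supported measures.

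For the first stage, I would fix a direction $a\in\RR^d$ and a bias $b\in\RR$, and note that since $ta$ is again an admissible weight vector for every scalar $t$, the analytic function
\[
g(t)\;=\;\sum_{i=1}^n \sigma\bigl(t\langle a,\vs_i\rangle+b\bigr)\;-\;\sum_{j=1}^m \sigma\bigl(t\langle a,\hat\vs_j\rangle+b\bigr)
\]
vanishes identically in $t$. Hence all its Taylor coefficients at $t=0$ vanish; the $k$-th one is $\sigma^{(k)}(b)\bigl(\sum_i\langle a,\vs_i\rangle^k-\sum_j\langle a,\hat\vs_j\rangle^k\bigr)$. Because $\sigma$ is \emph{non}-polynomial, $\sigma^{(k)}$ is not identically zero, so we can choose $b=b_k$ with $\sigma^{(k)}(b_k)\neq0$ and deduce $\sum_i\langle a,\vs_i\rangle^k=\sum_j\langle a,\hat\vs_j\rangle^k$ for all $k\ge0$ and all $a$. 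Expanding $\langle a,\vs\rangle^k$ by the multinomial theorem and comparing coefficients of these two polynomials in $a$ then promotes the identity to $\sum_i m(\vs_i)=\sum_j m(\hat\vs_j)$ for every monomial $m$, hence to $\sum_i p(\vs_i)=\sum_j p(\hat\vs_j)$ for every polynomial $p:\RR^d\to\RR$.

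For the second stage, let $T$ be the finite set of distinct points among $\vs_1,\dots,\vs_n,\hat\vs_1,\dots,\hat\vs_m$ and, for $w\in T$, let $\mu(w)$ and $\hat\mu(w)$ denote its multiplicities in the two multisets. For a fixed $w_0\in T$ I would build a polynomial $p_{w_0}$ with $p_{w_0}(w_0)=1$ and $p_{w_0}(w)=0$ for every other $w\in T$: for each $w\ne w_0$ there is an affine functional vanishing at $w$ but not at $w_0$ (the points are distinct), and a suitably renormalized product of these functionals works. Substituting $p_{w_0}$ into the polynomial identity from the first stage gives $\mu(w_0)=\hat\mu(w_0)$, and since $w_0$ was arbitrary we get $\mu\equiv\hat\mu$, i.e.\ $\lms\vs_1,\dots,\vs_n\rms=\lms\hat\vs_1,\dots,\hat\vs_m\rms$.

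I do not expect a genuine obstacle: the whole argument is elementary. The two spots that merit a moment's care are where non-polynomiality of $\sigma$ is actually used --- namely, securing for each $k$ a bias $b_k$ with $\sigma^{(k)}(b_k)\neq0$ --- and the bookkeeping in passing from agreement of the directional power sums to agreement of all monomial sums; both are routine, and the same scheme underlies the other separation lemmas stated in this subsection.
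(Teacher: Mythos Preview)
Your argument is correct. The paper does not supply its own proof of this lemma; it simply cites Proposition~3.2 of \cite{Neural_Injective}. Your two-stage route---extracting all directional moment identities from the vanishing Taylor coefficients of $t\mapsto\sigma(t\langle a,\vs\rangle+b)$ (using that $\sigma^{(k)}\not\equiv 0$ for every $k$ since $\sigma$ is analytic and non-polynomial), polarizing to get equality of all monomial sums, and then finishing with a Lagrange interpolation on the finite support---is exactly the standard mechanism behind that cited result, so your write-up is essentially a self-contained version of the proof the paper defers to a reference.
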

For proof of this lemma, see Proposition 3.2 in \cite{Neural_Injective}.

It remains to prove Lemma \ref{eq:hardest}. We note that if \eqref{eq:big} holds for all basic analytic networks $\phi^{(c)}, c=0,\ldots, C$, This equality is also true if each $\phi^{(c)}$ is a linear combination of a finite number of shallow neural networks. Next, since we are only considering the values of $\phi^{(c)}$ on a fixed finite collection of inputs, and since linear combinations of shallow neural networks with analytic (or even continuous) non-polynomial functions can approximate any continuous function uniformly on a compact set (and in particular on a finite set) \cite{approximations}, it follows that \eqref{eq:hardest} holds for all \emph{continuous} $\phi^{(c)}$.

Next, we consider the norm multisets: 
$$S=\lms \|z_1\|, \ldots,\|z_n\| \rms, \quad \hat{S} =\|\hat{z}_1\|, \ldots,\|\hat{z}_m\| .$$
We claim that $S=\hat{S}$. We first show that $S\subseteq \hat S$. Fix  some $s$ in $\{1,\ldots,n\}$, we need to show  that $\|z_s\| \in \hat S$. We define a continuous function $f^s$ such that $f^s(\|z_s\|)=1$ and $f^s(\|z_i\|)=0=f^s(\|\hat z_j\|)  $ for all $i\neq s$, and for all $j=1,\ldots,m$ such that the norms $\|\hat z_j\|$ and $\|z_s\|$ are not equal. We claim that there exists $j$ for which the norms \emph{are} equal. Otherwise, we  choose the continuous functions $\phi^{(c)}, c=1,\ldots,C $ to be the zero functions, and choose $\phi^{(0)}$ to be $f^s$.  This will lead to the equation $z_s=0$  in \ref{eq:hardest}, which contradicts the assumption that $z_s\neq 0$, so we see that $S \subseteq \hat S $. A symmetric argument shows that  $\hat S \subseteq  S $ and so $S=\hat S$. We deduce that $n=m$ and $\|z_i\|=\| \hat z_i \| $ for all $i=1,\ldots,n$. 
We can now conclude the proof: fix an arbitrary $s$ in $\{1,\ldots,n\}$. Choose $\phi^{(0)}=f^s$ and take the other $\phi^{(c)}$ to be zero. Then \ref{eq:hardest} gives
$$z_s-\hat{z}_s=0 $$
Similarly, for an arbitrary index $q$ in $\{1,\ldots,C\}$, we can choose $\phi^{(q)}=f^s $  and $\phi^{(c)}=0$ for all $c\neq q$ in $\{0,1,\ldots,C\}$. In this case \ref{eq:hardest} gives
$$v_{jq}-\hat{v}_{jq}=0 .$$
This concludes the proof of the lemma and the proof of the theorem.
\end{proof}

\subsection{Complexity}
\label{complexity}
Running $\us$ for $T$ iterations with $C$ channels requires $O(C \cdot d \cdot deg_{i})$ elementary arithmetic operation for every node $i$, at every iteration $t$. Overall, this requires $O(C \cdot d \cdot N \cdot d_{avg} \cdot T)$ to run $\us$, where $d_{avg}$ denotes the average degree in the graph. This is linear in $N$, which dominates the other terms in the complexity bound. If we set $C=2Nd+1$ as required for maximal expressivity in Theorem \ref{thm:maximally}, then the complexity will be quadratic in $N$, which is less by a factor of $n$ then \citep{three_iterations,disgnn}, and by a factor of $n^{2}$ than \cite{gramnet}. 
 We also note that the cardinality in the finite witness theorem, the main tool for our proof, depends on the intrinsic dimension of the input and not the ambient dimension. Thus, under the standard 'manifold assumption' \cite{manifold}, which asserts that data in learning tasks typically resides in a manifold of low dimension $r$,  embedded in a high dimensional Euclidean space, the proof can be adapted to show that the number of channels necessary for maximal expressivity is only $2r+1$. Thus, under the manifold assumption, the complexity of $\us$ is linear in the number of nodes $N$. 
\section{Adding node and edge features}
\label{app:features}
This section explains how we incorporate node and edge features into our architecture and theory. 

A featured geometric graph is a $4$-tuple $\G =(\mA_{1},\mX_{1},\mF_{1},\mE_{1})$ where as before $\mA$ is an $n$ by $n$ adjacency matrix, and $\mX$ is a $n $ by $d$ matrix denoting node positions. $\mF$ is a $n$ by $d_n$ matrix denoting node features (without the action of rigid motions), and $\mE$ is a $n\times n \times d_e$ tensor of edge features. We denote the feature at a node $i$ and edge $(i,j)$ by $f_i$ (a column of $\mF$) and $f_{i,j}$, respectively.

We incorporate edge and node features into $\us$ by slightly modifying the aggregation step to
\begin{align*}
    v^{(t+1)}_{i,q}=\sum_{j\in \gN_{i}}\psi_{n}^{(t,q)}(f_{i})\cdot \psi_{e}^{(t,q)}(f_{i,j}) \cdot (\phi^{(t,q,0)}(\|\vx_{i}-\vx_{j}\|,\|\vv_{j}\|)(\vx_{i}-\vx_{j})+ \\
    \sum_{c=1}^{k} \phi^{(t,q,c)}(\|\vx_{i}-\vx_{j}\|,\|\vv_{j}\|)v_{jc}^{(t)})
\end{align*}

In our analysis we will take $\psi_{n}^{(t,q)}:\RR^{d_{n}} \to \RR$ and $\psi_{e}^{(t,q)}:\RR^{d_{e}} \to \RR$ to be simple linear functions 
$$\psi_{n}^{(t,q)}(f_{i})=a_{n}^{(t,q)} \cdot f_{i}+b_{n}^{(t,q)}, \quad \psi_{e}^{(t,q)}(f_{i,j})=a_{e}^{(t,q)} \cdot f_{i,j}+b_{q}^{(t,q)} $$
In practice, in our code, we use a slightly more complex function involving radial basis functions as in \cite{disgnn}.

To extend our analysis to the case of node and edge features, we first extend our definition of isomorphism of geometric graphs to include edge and node features.

\begin{definition}
    Given two featured geometric graphs 
    \begin{align*}
    \G =(\mA_{1},\mX_{1},\mF_{1},\mE_{1}), \quad  \mH = (\mA_{2},\mX_{2},\mF_{2},\mE_{2})
    \end{align*}
     $\G$ and $\H$ are \emph{featured geometrically isomorphic} if there exists a permutation matrix $\mP$, a rotation matrix $\mQ$, and a translation ${\vt}$ such that:
     \begin{enumerate}
         \item $\mA_{2}$ = $ \mP \cdot \mA_{1} \cdot \mP^{T}$.
         \item $\mX_{2} = \mP \cdot \mX_{1} \cdot \mQ + \vt$
         \item $\mF_{2} = \mP \cdot \mF_{1}$
         \item $\mE_{2}$ = $ \mP \cdot \mE_{1} \cdot \mP^{T}$
     \end{enumerate}
     where $ \mP \cdot \mE_{1} \cdot \mP^{T}$ should be understood as operating element-wise on the $n_e$ matrices of dimension $n \times n$ which compose the tensor $\mE $.
\end{definition}

We can then prove the following theorem: a generalization of \autoref{cor:us} to the featured graph case. 

\begin{theorem}
Let $N,d,d_e,d_n$ be natural numbers. Let $F_\theta$ denote the $\us$ architecture for featured geometric graphs, with depth $T=d+1$ and $C=2Nd+d_{e}+d_{n}+1$ channels. Then for Lebesgue almost every $\theta$, we have for all featured graphs $\G, \G'\in \GdN $ which are generic and connected,
$$F_\theta(\G)=F_\theta(\hat{\G}) \iff \G, \hat{\G} \text{ are featured geometrically isomorphic}$$
\end{theorem}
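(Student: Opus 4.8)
The plan is to obtain this statement the way \autoref{cor:us} is obtained from \autoref{thm:ggnn} and \autoref{thm:maximally} in the unfeatured case: I would first prove a featured-graph analogue of \autoref{thm:ggnn} (a maximally expressive featured E-GGNN identifies every connected featured graph, generically), then a featured-graph analogue of \autoref{thm:maximally} (for Lebesgue almost every $\theta$ the featured $\us$ with $C=2Nd+d_n+d_e+1$ channels is a generically maximally expressive featured E-GGNN), and finally specialize to $T=d+1$ to conclude. The whole argument is a transcription of the proofs of \autoref{thm:ggnn} and \autoref{thm:maximally}, carrying the node matrix $\mF$ and edge tensor $\mE$ along.

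\textbf{The featured E-GGNN side.} I would enlarge the E-GGNN template \eqref{eq:GGNN} so that node $i$ aggregates the multiset $\lms(\vv_j^{(t)},\vx_i-\vx_j,f_i,f_{ij})\mid j\in\gN_i\rms$, and call such a network maximally expressive when this aggregation is injective up to a global rotation of the $\vx$- and $\vv$-parts and $\finv$, \textsc{ReadOut} are injective up to their usual ambiguities. Then the proof of \autoref{thm:ggnn} applies almost verbatim: $\sglobal=\sglobalhat$ gives $\lms\vs_i\rms=\lms\hat{\vs}_i\rms$; relabel so $\vs_i=\hat{\vs}_i$; obtain $R_i\in O(d)$ with $\vv_i^{(T)}=R_i\hat{\vv}_i^{(T)}$; peel back layer by layer. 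The only extra bookkeeping is that each layer-$t$ multiset equality now additionally forces $f_i=\hat{f}_i$ and $f_{ij}=\hat{f}_{ij}$ on edges, so, using genericity of $\mX$ exactly as in the original proof (an edge length $\|\vx_i-\vx_j\|$ occurs in precisely the two $1$-hop multisets of $i$ and $j$), we simultaneously recover $\mA=\hat{\mA}$, $\mF=\hat{\mF}$, $\mE=\hat{\mE}$, and $\vx_i-\vx_k=R_i(\hat{\vx}_i-\hat{\vx}_k)$ for every $(d+1)$-hop neighbor $k$ of $i$. Generic global rigidity of the $(d+1)$-power graph of a connected graph \cite{power-graph,power-graph2} then upgrades this to $\|\vx_i-\vx_k\|=\|\hat{\vx}_i-\hat{\vx}_k\|$ for all pairs, hence the point clouds coincide up to a rigid motion; together with $\mA=\hat{\mA}$, $\mF=\hat{\mF}$, $\mE=\hat{\mE}$ this is exactly featured geometric isomorphism.

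\textbf{The featured $\us$ side.} Next I would re-run the argument of Appendix~\ref{sub:maximal}. For fixed node counts and fixed combinatorial graphs, the domain of pairs of featured graphs in $\GdNdistinct$ is cut out by finitely many polynomial (in)equalities, hence is semi-algebraic and $\sigma$-subanalytic, so the finite witness theorem (Theorem~\ref{thm_fwt_full}) applies recursively across the $T$ layers as before; tracking the intrinsic dimension of this domain — the $d$ position coordinates, the $d_n$ node-feature coordinates, and the $d_e$ edge-feature coordinates attached to each (individually isolated) neighbor — pins the channel count at $C=2Nd+d_n+d_e+1$. The layerwise separation lemmas Lemma~\ref{eq:rot} and the readout lemma are reused verbatim; the new ingredient is a featured version of Lemma~\ref{eq:hardest}, asserting that if the featured messages $\psi_n(f_i)\,\psi_e(f_{ij})\bigl(\phi^{(0)}(\|\vx_i-\vx_j\|)(\vx_i-\vx_j)+\sum_{c}\phi^{(c)}(\|\vx_i-\vx_j\|)v_{jc}\bigr)$ aggregate to the same vector for all affine $\psi_n,\psi_e$ and all basic-analytic $\phi^{(0)},\dots,\phi^{(C)}$, then $f_i=\hat{f}_k$ and the neighbor multisets $\lms(\vx_i-\vx_j,\vv_j^{(t)},f_{ij})\rms$ coincide. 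I would prove it in the original style: linear combinations of basic-analytic networks approximate arbitrary continuous functions of the finitely many distinct edge norms, so one may choose $\phi^{(c)}$ that isolate a single neighbor $j$; taking $\psi_n\equiv\psi_e\equiv 1$ then reads off each triple $(\vx_i-\vx_j,\vv_j^{(t)},f_{ij})$ (hence the neighbor multiset, and so $\mE$), after which varying $\psi_n$ over all affine functionals of $f_i$ and comparing the now-equal neighbor sums gives $f_i=\hat{f}_k$.

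\textbf{Main obstacle.} The delicate point is exactly the featured Lemma~\ref{eq:hardest} together with its dimension accounting: one must check that placing $\psi_n(f_i)$ as a scalar multiplier \emph{outside} the neighbor sum does not destroy injectivity — i.e. that the central feature $f_i$ is still recoverable even on configurations where the aggregated inner message vanishes — and that the edge-feature freedoms, being attached to already-isolated neighbors, enter the finite-witness bound additively as $d_e$ rather than scaling with the number of edges; these are the places where the clean channel count $C=2Nd+d_n+d_e+1$ has to be earned. Everything else is a routine transcription of the proofs of \autoref{thm:ggnn} and \autoref{thm:maximally}.
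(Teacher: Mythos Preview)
Your route is sound but longer than the paper's. You propose to re-prove Theorems~\ref{thm:ggnn} and~\ref{thm:maximally} in a featured setting, introducing a featured maximally-expressive E-GGNN and a featured Lemma~\ref{eq:hardest}. The paper instead exploits that the parameter set over which ``for all $\theta$'' ranges already contains the choice $a_n=a_e=0$, $b_n=b_e=1$, which makes $\psi_n\equiv\psi_e\equiv 1$ and collapses the featured $\us$ to the unfeatured one. Corollary~\ref{cor:us} (not its ingredients) is then invoked as a black box to conclude $\mA=\hat\mA$, $\vx_i=\hat\vx_i$, and $\vv_i^{(t)}=\hat\vv_i^{(t)}$ after relabeling and a rigid motion; the node and edge features are recovered afterwards in one short step --- isolate a single edge with $\phi^{(t,q,0)}$, cancel the nonzero factor $\vx_i-\vx_j$, and vary the affine $\psi_n,\psi_e$ to force $f_i=\hat f_i$ and $f_{ij}=\hat f_{ij}$. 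This sidesteps any featured analogue of Lemma~\ref{eq:hardest} and any re-run of the Theorem~\ref{thm:ggnn} induction.

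One genuine slip in your sketch: setting $\psi_n\equiv\psi_e\equiv 1$ cannot ``read off the triple $(\vx_i-\vx_j,\vv_j^{(t)},f_{ij})$'' --- with $\psi_e\equiv 1$ the edge feature $f_{ij}$ has disappeared from the message entirely. The correct order (and this is exactly what the paper does) is: first $\psi\equiv 1$ recovers the pairs $(\vx_i-\vx_j,\vv_j^{(t)})$; then isolate a neighbor via $\phi^{(t,q,0)}$ and vary $\psi_e$ and $\psi_n$ separately to extract $f_{ij}$ and $f_i$. Your worry about $f_i$ being lost when the aggregated inner message vanishes is not an obstacle in the generic regime, since after isolating a single neighbor $j$ the inner message is $\vx_i-\vx_j\neq 0$. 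As for the channel count $C=2Nd+d_n+d_e+1$ that you flag as delicate: the paper simply asserts it (``adapted to take into account the higher dimensionality of the data'') without a detailed dimension accounting, so you will not find a resolution of that concern there.
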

We note that the assumption that the geometric graph is generic is only on the position coordinates $\mX$. The statement will hold for all node and edge features in this case. 
\begin{proof}
We explain how to modify the proof of Theorem \ref{thm:maximally} to obtain this result. First, we note that the number of channels was adapted to take into account the higher dimensionality of the data now that node and edge features are included. The finite witness theorem can be applied with this adapted dimension, as explained in Theorem \ref{thm:maximally}. Thus, we only need to show that, for given featured graphs $\hat{\G},\G $ satisfying the conditions of the theorem, if for all $\theta$ we have that $F_\theta(\G)=F_\theta(\hat \G) $, then $\G $ and $\hat \G$ are featured geometrically isomorphic.  

Now, let us assume that, for all $\theta$ we have  $F_\theta(\G)=F_\theta(\hat \G)$. We can obtain all the results we had for the case where there were no additional features by choosing the parameters 
$$a_n^{(t,q)}=0, b_n^{(t,q)}=1, a_e^{(t,q)}=0, b_e^{(t,q)}=1   $$
so that $\psi_n^{(t,q)}=1=\psi_e^{(t,q)}$, and the effect of node and edge features on the architectures is 'cancelled out'. In particular, we can deduce that, after an appropriate permutation and rigid motion are applied to the data, the graphs have the same number of nodes $n$, and for all node $i$ and iteration $t$, we have
$$\vv_i^{(t)}=\hat {\vv}_i^{(t)}, \vx_i=\hat{\vx}_i $$
Now,  by setting $\phi^{(t,q,c)}=0$ for $c\geq 1$, and considering  $\phi^{(t,q,0)}$ which depend only on the first coordinate, we can deduce that 
\begin{align*}
    \sum_{j\in \gN_i} \left[(\psi_n^{(t,q)}(f_i)\cdot \psi_e^{(t,q)}(f_{i,j})-\psi_n^{(t,q)}(\hat{f}_i)\cdot \psi_e^{(t,q)}(\hat{f}_{i,j})\right]\cdot \left( \phi^{(t,q,0)}(\|\vx_i-\vx_j\|)(\vx_i-\vx_j) \right)=0
\end{align*}
for all choices of parameters of the functions $\psi_n^{(t,q)},\psi_e^{(t,q)}$ and $\phi^{(t,q,0)}$. As we showed in the proof of \autoref{thm:maximally}, this equation implies that this equation will also hold for any continuous $\phi^{(t,q,0)}$ (and any choice of parameters of the other functions) since the span of all such basic analytic functions is dense. In particular, for fixed $i,j$, we can choose $\phi^{(t,q,0)}$ to be a continuous functions with $\phi^{(t,q,0)}(\|\vx_i-\vx_j\|)=1$ and $\phi^{(t,q,0)}(\|\vx_i-\vx_k\|)=0$ if $k \neq j$. Inserting this into the last equality we obtain 
$$\left[\psi_n^{(t,q)}(f_i)\cdot \psi_e^{(t,q)}(f_{i,j})-\psi_n^{(t,q)}(\hat{f}_i)\cdot \psi_e^{(t,q)}(\hat{f}_{i,j})\right] \cdot  (\vx_i-\vx_j) =0$$
by genericity $\vx_i-\vx_j \neq 0$, and so we deduce that 
$$ \psi_n^{(t,q)}(f_i)\cdot \psi_e^{(t,q)}(f_{i,j})-\psi_n^{(t,q)}(\hat{f}_i)\cdot \psi_e^{(t,q)}(\hat{f}_{i,j})=0$$
for every choice of function parameters. Setting 
$a_n^{(t,q)}=0, b_n^{(t,q)}=1, b_e^{(t,q)}=0   $
we obtain that 
$$a_e \cdot (f_{ij}-\hat{f}_{ij})=0, \quad \forall a_e \in \RR^{d_e}$$
and therefore $f_{ij}-\hat{f}_{ij}=0 $. Similarly, setting $a_e^{(t,q)}=0, b_e^{(t,q)}=1, b_n^{(t,q)}=0   $ we obtain that 
$$a_n \cdot (f_{i}-\hat{f}_{i})=0, \quad \forall a_n \in \RR^{d_n}$$
and therefore $f_{i}-\hat{f}_{i}=0 $. This concludes the proof of the theorem.
\end{proof}

\section{Additional Empirical Results}
\paragraph{k-chain results}
Here, we show the results of the $k$-chain experiment for $k=4$. As we can see, only our method $\us$  and GVP \cite{gvp-egnn} succeed using the minimal number of blocks. This experiment illustrates our model's ability to distinguish very sparse graphs. Next, we show results for $k=12$. As can be seen, we are the only ones succeeding with the minimal number of blocks, and other models need a higher number of blocks to succeed in distinguishing.
\begin{table*}[t!]
   % \hfill
     \caption{Separation Learning of $4$-chain paths \autoref{fig:Two pairs for power Graphs}. We show the learned accuracy over ten trials for each model, depending on the number of blocks. We show the minimal needed number of blocks (3) is sufficient for separation, and only \cite{gvp-egnn} also succeeded (among equivariant models working on the sparse graph) all $10$ trials using $3$ blocks.}
    \begin{subtable}
        \centering
        \resizebox{\linewidth}{!}{
        \begin{tabular}{clccccc}
            \toprule
            & ($k=\mathbf{4}$-chains) & \multicolumn{5}{c}{\textbf{Number of layers}} \\
            & \textbf{GNN Layer} & $\lfloor \frac{k}{2} \rfloor$ & \cellcolor{gray!10} $\lfloor \frac{k}{2} \rfloor + 1 = \mathbf{3}$ & $\lfloor \frac{k}{2} \rfloor + 2$ & $\lfloor \frac{k}{2} \rfloor + 3$ & $\lfloor \frac{k}{2} \rfloor + 4$ \\
            \midrule
            \multirow{5}{*}{\rotatebox[origin=c]{90}{Equiv.}} &
            \gray{GWL} & \gray{50\%} & \gray{\textbf{100\%}} & \gray{\textbf{100\%}} & \gray{\textbf{100\%}} & \gray{\textbf{100\%}} \\
            & E-GNN \cite{egnn} & 50.0 {\scriptsize ± 0.0} & \cellcolor{red!10} 50.0 {\scriptsize ± 0.0} & \cellcolor{red!10} 50.0 {\scriptsize ± 0.0} & \cellcolor{red!10} 50.0 {\scriptsize ± 0.0} & \cellcolor{green!10} 100.0 {\scriptsize ± 0.0} \\
            & GVP-GNN \cite{gvp-egnn} & 50.0 {\scriptsize ± 0.0} & \cellcolor{green!10} 100.0 {\scriptsize ± 0.0} & \cellcolor{green!10} 100.0 {\scriptsize ± 0.0} & \cellcolor{green!10} 100.0 {\scriptsize ± 0.0} & \cellcolor{green!10} 100.0 {\scriptsize ± 0.0} \\
            & TFN \cite{TFN} & 50.0 {\scriptsize ± 0.0} & \cellcolor{red!10} 50.0 {\scriptsize ± 0.0} & \cellcolor{red!10} 50.0 {\scriptsize ± 0.0} & \cellcolor{blue!10} 80.0 {\scriptsize ± 24.5} & \cellcolor{blue!10} 85.0 {\scriptsize ± 22.9} \\
            & MACE \cite{mace} & 50.0 {\scriptsize ± 0.0} & \cellcolor{blue!10} 90.0 {\scriptsize ± 20.0} & \cellcolor{blue!10} 90.0 {\scriptsize ± 20.0} & \cellcolor{blue!10} 95.0 {\scriptsize ± 15.0} & \cellcolor{blue!10} 95.0 {\scriptsize ± 15.0} \\
             & $\us$(ours) & 50.0 {\scriptsize ± 0.0} & \cellcolor{green!10} 100.0 {\scriptsize ± 0.0} & \cellcolor{green!10} 100.0 {\scriptsize ± 0.0} & \cellcolor{green!10} 100.0 {\scriptsize ± 0.0} & \cellcolor{green!10} 100.0 {\scriptsize ± 0.0} \\
            \midrule
            \multirow{4}{*}{\rotatebox[origin=c]{90}{Inv.}} &
            \gray{IGWL} & \gray{50\%} & \gray{50\%} & \gray{50\%} & \gray{50\%} & \gray{50\%} \\
            & SchNet \cite{SchNet} & 50.0 {\scriptsize {\scriptsize ± 0.0}} & 50.0 {\scriptsize {\scriptsize ± 0.0}} & 50.0 {\scriptsize {\scriptsize ± 0.0}} & 50.0 {\scriptsize {\scriptsize ± 0.0}} & 50.0 {\scriptsize {\scriptsize ± 0.0}} \\
            & DimeNet \cite{DimeNet} & 50.0 {\scriptsize {\scriptsize ± 0.0}} & 50.0 {\scriptsize {\scriptsize ± 0.0}} & 50.0 {\scriptsize {\scriptsize ± 0.0}} & 50.0 {\scriptsize {\scriptsize ± 0.0}} & 50.0 {\scriptsize {\scriptsize ± 0.0}} \\
            & SphereNet \cite{SphereNet} & 50.0 {\scriptsize {\scriptsize ± 0.0}} & 50.0 {\scriptsize {\scriptsize ± 0.0}} & 50.0 {\scriptsize {\scriptsize ± 0.0}} & 50.0 {\scriptsize {\scriptsize ± 0.0}} & 50.0 {\scriptsize {\scriptsize ± 0.0}} \\
            \cmidrule{2-7}
            & SchNet $_{\text{full graph}} \footref{schnet models}$ \cite{SchNet} & \cellcolor{green!10} 100.0 {\scriptsize ± 0.0} & \cellcolor{green!10} 100.0 {\scriptsize ± 0.0} & \cellcolor{green!10} 100.0 {\scriptsize ± 0.0} & \cellcolor{green!10} 100.0 {\scriptsize ± 0.0} & \cellcolor{green!10} 100.0 {\scriptsize ± 0.0} \\
            & SchNet$_{\text{global feat}} \footref{schnet models}$ \cite{SchNet} & \cellcolor{green!10} 100.0 {\scriptsize ± 0.0} & \cellcolor{green!10} 100.0 {\scriptsize ± 0.0} & \cellcolor{green!10} 100.0 {\scriptsize ± 0.0} & \cellcolor{green!10} 100.0 {\scriptsize ± 0.0} & \cellcolor{green!10} 100.0 {\scriptsize ± 0.0} \\
            \bottomrule
        \end{tabular}
        }
    \end{subtable}
    \hfill
    \label{tab:4chains}
\end{table*}

\begin{table*}[b!]
    \hfill
      \caption{Separation Learning of $12$-chain paths \autoref{fig:Two pairs for power Graphs}. We show the learned accuracy over ten trials for each model depending on the number of blocks. As we can see, we are the only ones that succeed with the minimal number of blocks ($7$), and other models need many more layers to show high Performance.}
    \begin{subtable}
        \centering
        \resizebox{\linewidth}{!}{
        \begin{tabular}{clcccccc}
            \toprule
            & ($k=\mathbf{12}$-chains) & \multicolumn{5}{c}{\textbf{Number of layers}} \\
            & \textbf{GNN Layer} & $ \lfloor \frac{k}{2} \rfloor $ & \cellcolor{gray!10} $\lfloor \frac{k}{2} \rfloor +1$ & $ \lfloor \frac{k}{2} \rfloor 2$ & $ \lfloor \frac{k}{2} \rfloor +3$ & $ \lfloor \frac{k}{2} \rfloor +4$\\
            \midrule
            \multirow{5}{*}{\rotatebox[origin=c]{90}{Equiv.}} &
            \gray{Maximally expressiveness E-GGNN} & \gray{50\%} & \gray{\textbf{100\%}} & \gray{100\%} & \gray{100\%} & \gray{\textbf{100\%}} &  \\
            & E-GNN \cite{egnn} & 50.0 {\scriptsize ± 0.0} & \cellcolor{red!10} 50.0 {\scriptsize ± 0.0} & \cellcolor{red!10} 50.0 {\scriptsize ± 0.0} &  
            \cellcolor{red!10} 50.0 {\scriptsize ± 0.0} &
            \cellcolor{blue!10} 95 {\scriptsize ± 15.0} &    \\
            & GVP-GNN \cite{gvp-egnn}  & 50.0 {\scriptsize ± 0.0} & \cellcolor{red!10} 50.0 {\scriptsize ± 0.0} & \cellcolor{blue!10} 60.0 {\scriptsize ± 20.0} & \cellcolor{red!10} 50.0 {\scriptsize ± 0.0} & \cellcolor{blue!10} 60.0 {\scriptsize ± 20.0} & \\
            & TFN \cite{TFN} & 50.0 {\scriptsize ± 0.0} & \cellcolor{red!10} 50.0 {\scriptsize ± 0.0} & \cellcolor{red!10} 50.0 {\scriptsize ± 0.0} & \cellcolor{red!10} 50.0 {\scriptsize ± 0.0} & \cellcolor{red!10} 50.0 {\scriptsize ± 0.0} & \\
            & MACE \cite{mace} & 50.0 {\scriptsize ± 0.0} & \cellcolor{red!10} 50.0 {\scriptsize ± 0.0} & \cellcolor{red!10} 50.0 {\scriptsize ± 0.0} & \cellcolor{red!10} 50.0 {\scriptsize ± 0.0} & \cellcolor{red!10} 50.0 {\scriptsize ± 0.0} & \\
             & $\us$(ours) & 50.0 {\scriptsize ± 0.0} & \cellcolor{green!10} 100.0 {\scriptsize ± 0.0} & \cellcolor{green!10} 100.0 {\scriptsize ± 0.0} & \cellcolor{green!10} 100.0 {\scriptsize ± 0.0} & \cellcolor{green!10} 100.0 {\scriptsize ± 0.0}& \\
            \midrule
            \multirow{4}{*}{\rotatebox[origin=c]{90}{Inv.}} &
            \gray{Maximally expressiveness I-GGNN} & \gray{50\%} & \gray{50\%} & \gray{50\%} & \gray{50\%} & \gray{50\%} & \\
            & SchNet \cite{SchNet} & 50.0 {\scriptsize {\scriptsize ± 0.0}} & 50.0 {\scriptsize {\scriptsize ± 0.0}} & 50.0 {\scriptsize {\scriptsize ± 0.0}} & 50.0 {\scriptsize {\scriptsize ± 0.0}} & 50.0 {\scriptsize {\scriptsize ± 0.0}} \\
            & DimeNet \cite{DimeNet} & 50.0 {\scriptsize {\scriptsize ± 0.0}} & 50.0 {\scriptsize {\scriptsize ± 0.0}} & 50.0 {\scriptsize {\scriptsize ± 0.0}} & 50.0 {\scriptsize {\scriptsize ± 0.0}} & 50.0 {\scriptsize {\scriptsize ± 0.0}}& \\
            & SphereNet \cite{SphereNet} & 50.0 {\scriptsize {\scriptsize ± 0.0}} & 50.0 {\scriptsize {\scriptsize ± 0.0}} & 50.0 {\scriptsize {\scriptsize ± 0.0}} & 50.0 {\scriptsize {\scriptsize ± 0.0}} & 50.0 {\scriptsize {\scriptsize ± 0.0}} \\
            \cmidrule{2-7}
            & $\text{SchNet}_{\text{full graph}}$\footref{schnet models} \cite{SchNet} & \cellcolor{green!10} 100.0 {\scriptsize ± 0.0} & \cellcolor{green!10} 100.0 {\scriptsize ± 0.0} & \cellcolor{green!10} 100.0 {\scriptsize ± 0.0} & \cellcolor{green!10} 100.0 {\scriptsize ± 0.0} & \cellcolor{green!10} 100.0 {\scriptsize ± 0.0}&  \\
            & $\text{SchNet}_{\text{global feat}}$\footref{schnet models} \cite{SchNet} & \cellcolor{green!10} 100.0 {\scriptsize ± 0.0} & \cellcolor{green!10} 100.0 {\scriptsize ± 0.0} & \cellcolor{green!10} 100.0 {\scriptsize ± 0.0} & \cellcolor{green!10} 100.0 {\scriptsize ± 0.0} & \cellcolor{green!10} 100.0 {\scriptsize ± 0.0}& \\
            \bottomrule
        \end{tabular}
        }
    \end{subtable}
    \hfill
    \label{tab:12chains}
\end{table*}

\paragraph{Hard examples}
This section presents the results mentioned in Section \ref{sepearion exps}. We use a dataset of four difficult-to-separate pairs of point clouds. Three (Hard1,Hard2,Hard3) are taken from
\cite{PhysRevLett.125.166001}. The last pair, 'Harder,' is taken from \cite{gcnn} and is an example of a non-isomorphic pair that I-GGNN models cannot distinguish. We show we succeeded in separating all tuples with a probability of $1$.
The tested models are $\us$, GramNet \cite{gramnet}, GeoEGNN \cite{gramnet}, EGNN \cite{egnn}, LinearEGNN, MACE \cite{mace}, TFN \cite{TFN}, DimeNet \cite{DimeNet}, GVPGNN \cite{gvp-egnn} and are taken from \cite{gramnet}. 

% Required packages:
% \usepackage{booktabs}
% \usepackage[table]{xcolor}

\begin{table*}[t]
    \centering
    \caption{
        Accuracy on challenging point-cloud benchmarks
        \cite{gcnn,PhysRevLett.125.166001}.
        Higher values are better; \(0.5\) corresponds to chance-level performance.
    }
    \label{tab:hard_pc_results}

    \setlength{\tabcolsep}{12pt}
    \renewcommand{\arraystretch}{1.08}

    \begin{tabular}{@{}lcccc@{}}
        \toprule
        \textbf{Model}
        & \textbf{Hard1}
        & \textbf{Hard2}
        & \textbf{Hard3}
        & \textbf{Harder} \\
        \midrule

        GramNet
        & \textbf{1.000}
        & \textbf{1.000}
        & \textbf{1.000}
        & \textbf{1.000} \\

        GeoEGNN
        & 0.998
        & 0.970
        & 0.850
        & 0.899 \\

        EGNN
        & 0.500
        & 0.500
        & 0.500
        & 0.500 \\

        LinearEGNN
        & \textbf{1.000}
        & \textbf{1.000}
        & \textbf{1.000}
        & 0.500 \\

        MACE
        & \textbf{1.000}
        & \textbf{1.000}
        & \textbf{1.000}
        & \textbf{1.000} \\

        TFN
        & 0.500
        & 0.500
        & 0.550
        & 0.500 \\

        DimeNet
        & \textbf{1.000}
        & \textbf{1.000}
        & \textbf{1.000}
        & \textbf{1.000} \\

        GVPGNN
        & \textbf{1.000}
        & \textbf{1.000}
        & \textbf{1.000}
        & \textbf{1.000} \\

        \midrule
        \rowcolor{gray!10}
        \textbf{\(\us\) (Ours)}
        & \textbf{1.000}
        & \textbf{1.000}
        & \textbf{1.000}
        & \textbf{1.000} \\

        \bottomrule
    \end{tabular}
\end{table*}

\paragraph{Chemical property experiments with full statistics}
the table for the chemical property prediction tasks in the main text, Table \ref{mean}, was produced using the protocol from \cite{baselines}. In particular, each method was run with three different seeds. The seed that did best on the validation set was chosen, and its results on the test set were reported.

Table \ref{mean} presents each task's mean and standard deviation over $3$ seeds. As we can see, in most tasks, the standard deviation is rather low, and the results are qualitatively similar to those obtained in Table \ref{cpe}. In particular, in the tasks in the Kraken and BDE datasets, we outperform the competitors, often by a large margin. In the Drugs 75K, we obtain comparable, but somewhat higher, results than the best method (while in Table \ref{cpe}, in two of three Drugs-75K  cases, our results were the best by a small margin).

\renewcommand{\arraystretch}{2.2} % Increase row 
\begin{table}[t]
    \centering
      \caption{ Raw performance data (mean ± standard deviation) of 1D, 2D, 3D, and conformer ensemble MRL models all taken from \cite{baselines} in terms of absolute test error. Our model is in the 3D category, named $\us$.}
    \resizebox{\textwidth}{!}{%
    \begin{tabular}{llccccccccc}
        \toprule
        \rowcolor{gray!25} Category & Model & IP & EA & $\chi$ & B\textsubscript{5} & L & BurB\textsubscript{5} & BurL & BDE \\
        \midrule
        \rowcolor{gray!10} 1D & Random Forest & 0.4987±0.0037 & 0.4747±0.0022 & 0.2732±0.0031 & 0.4760±0.0041 & 0.4303±0.0090 & 0.2758±0.0180 & 0.1521±0.0149 & 3.03±0.27 \\
        \rowcolor{gray!10} 1D & LSTM & 0.4788±0.0024 & 0.4648±0.0002 & 0.2505±0.0050 & 0.4879±0.0280 & 0.5142±0.0411 & 0.2813±0.0041 & 0.1924±0.0028 & 2.82±0.07 \\
        \rowcolor{gray!10} 1D & Transformer & 0.6617±0.0023 & 0.5850±0.0031 & 0.4073±0.0006 & 0.9611±0.0813 & 0.8389±0.0431 & 0.4929±0.0369 & 0.2781±0.0207 & 10.08±0.64 \\
        \midrule
        \rowcolor{snsblue!15} 2D & GIN & 0.4354±0.0029 & 0.4169±0.0032 & 0.2260±0.0017 & 0.3128±0.0264 & 0.4003±0.0341 & 0.1719±0.0031 & 0.1200±0.0040 & 2.63±0.22 \\
        \rowcolor{snsblue!15} 2D & GIN-VN & 0.4361±0.0059 & 0.4169±0.0083 & 0.2267±0.0002 & 0.3567±0.0031 & 0.4344±0.0416 & 0.2422±0.0033 & 0.1741±0.0109 & 2.74±0.24 \\
        \rowcolor{snsblue!15} 2D & ChemProp & 0.4595±0.0028 & 0.4417±0.0045 & 0.2441±0.0012 & 0.4850±0.0068 & 0.5452±0.0454 & 0.3002±0.0086 & 0.1948±0.0138 & 2.66±0.14 \\
        \rowcolor{snsblue!15} 2D & GraphGPS & 0.4351±0.0049 & 0.4085±0.0055 & 0.2212±0.0054 & 0.3450±0.0324 & 0.4363±0.0133 & 0.2066±0.0115 & 0.1500±0.0138 & 2.48±0.19 \\
        \midrule
        \rowcolor{snsorange!15} 3D & SchNet & 0.4394±0.0062 & 0.4207±0.0021 & 0.2243±0.0089 & 0.3293±0.0068 & 0.5458±0.0341 & 0.2295±0.0111 & 0.1861±0.0095 & 2.54±0.00 \\
        \rowcolor{snsorange!15} 3D & DimeNet++ & 0.4441±0.0087 & 0.4233±0.0072 & 0.2436±0.0075 & 0.3510±0.0107 & 0.4174±0.0397 & 0.2097±0.0160 & 0.1526±0.0072 & \underline{1.45±0.03} \\
        \rowcolor{snsorange!15} 3D & GemNet & 0.4069±0.0007 & \textbf{0.3922±0.0024} & \textbf{0.1970±0.0039} & 0.2789±0.0125 & 0.3754±0.0086 & 0.1782±0.0099 & 0.1635±0.0063 & 1.65±0.30 \\
        \rowcolor{snsorange!15} 3D & PaiNN & 0.4505±0.0041 & 0.4495±0.0054 & 0.2324±0.0040 & 0.3443±0.0388 & 0.4471±0.0324 & 0.2395±0.0176 & 0.1673±0.0088 & 2.12±0.09 \\
        \rowcolor{snsorange!15} 3D & ClofNet & 0.4393±0.0084 & 0.4251±0.0066 & 0.2378±0.0020 & 0.4873±0.0093 & 0.6417±0.0362 & 0.2884±0.0166 & 0.2529±0.0052 & 2.60±0.02 \\
        \rowcolor{snsorange!15} 3D & LEFTNet & 0.4174±0.0007 & 0.3964±0.0009 & 0.2083±0.0054 & 0.3072±0.0012 & 0.4493±0.0261 & 0.2176±0.0010 & 0.1486±0.0095 & 1.53±0.05 \\
        \rowcolor{snsorange!15} 3D & $\us$(Ours) & 0.4251±0.0463 & \underline{0.3935±0.0361} & 0.2332±0.0248 & \textbf{0.1953±0.0068} & \textbf{0.1956±0.0264} & \textbf{0.1545±0.0019} & \textbf{0.0644±0.0017} & \textbf{0.4455±0.1305} \\
        \midrule
        \rowcolor{snsgreen!15} 3D + Sampling & SchNet & 0.4452±0.0080 & 0.4232±0.0042 & 0.2243±0.0022 & 0.3235±0.0147 & 0.4598±0.0041 & 0.2086±0.0111 & 0.1739±0.0142 & 1.97±0.01 \\
        \rowcolor{snsgreen!15} 3D + Sampling & DimeNet++ & 0.4395±0.0032 & 0.4217±0.0040 & 0.2432±0.0048 & 0.3323±0.0320 & 0.4153±0.0208 & 0.2237±0.0122 & 0.1561±0.0241 & 1.47±0.03 \\
        \rowcolor{snsgreen!15} 3D + Sampling & GemNet & \textbf{0.4066±0.0015} & 0.3910±0.0004 & \underline{0.2027±0.0013} & 0.2694±0.0221 & 0.3488±0.0252 & 0.1796±0.0098 & 0.1184±0.0033 & 1.60±0.10 \\
        \rowcolor{snsgreen!15} 3D + Sampling & PaiNN & 0.4466±0.0087 & 0.4393±0.0045 & 0.2331±0.0037 & 0.3441±0.0161 & 0.4358±0.0343 & 0.2476±0.0070 & 0.1543±0.0022 & 1.92±0.01 \\
        \rowcolor{snsgreen!15} 3D + Sampling & ClofNet & 0.4430±0.0074 & 0.4237±0.0005 & 0.2335±0.0090 & 0.4524±0.0935 & 0.5962±0.0074 & 0.2442±0.0109 & 0.1756±0.0112 & 2.51±0.23 \\
        \rowcolor{snsgreen!15} 3D + Sampling & LEFTNet & \underline{0.4149±0.0019} & 0.3988±0.0048 & 0.2141±0.0084 & 0.2834±0.0068 & 0.4407±0.0531 & 0.2120±0.0097 & 0.1547±0.0101 & 1.52±0.00 \\
        \midrule
        \rowcolor{snspurple!10} SchNet & Mean & 0.4583±0.0019 & 0.4410±0.0018 & 0.2371±0.0098 & 0.3075±0.0151 & 0.4691±0.0234 & 0.2282±0.0206 & 0.1619±0.0062 & 2.53±0.02 \\
        \rowcolor{snspurple!10}        & DeepSet & 0.4537±0.0065 & 0.4396±0.0010 & 0.2385±0.0066 & 0.3105±0.0381 & 0.4322±0.0464 & 0.2249±0.0234 & 0.1535±0.0076 & 2.29±0.22 \\
        \rowcolor{snspurple!10}        & Attention & 0.4556±0.0075 & 0.4382±0.0125 & 0.2380±0.0007 & 0.2704±0.0187 & 0.4517±0.0132 & 0.2024±0.0183 & 0.1443±0.0043 & 2.64±0.00 \\
        \midrule
        \rowcolor{snspurple!10} DimeNet++ & Mean & 0.4488±0.0086 & 0.4340±0.0079 & 0.2425±0.0060 & 0.2630±0.0122 & 0.3828±0.0331 & 0.1960±0.0059 & 0.1268±0.0060 & 1.79±0.12 \\
        \rowcolor{snspurple!10}           & DeepSet & 0.4126±0.0076 & 0.3944±0.0034 & 0.2267±0.0047 & 0.2889±0.0069 & 0.3468±0.0090 & 0.1783±0.0110 & 0.1339±0.0087 & 1.75±0.01 \\
        \rowcolor{snspurple!10}           & Attention & 0.4188±0.0024 & 0.4030±0.0075 & 0.2325±0.0028 & 0.3718±0.0300 & 0.3628±0.0259 & 0.1899±0.0081 & 0.1185±0.0105 & 2.57±0.21 \\
        \midrule
        \rowcolor{snspurple!10} GemNet & Mean & 0.4505±0.0052 & 0.4334±0.0023 & 0.2289±0.0032 & 0.2635±0.0053 & 0.3753±0.0036 & 0.1671±0.0154 & 0.1587±0.0029 & 2.19±0.06 \\
        \rowcolor{snspurple!10}        & DeepSet & 0.4187±0.0022 & 0.4002±0.0012 & 0.2169±0.0036 & 0.2313±0.0026 & \underline{ 0.3386±0.0269} & \underline{ 0.1589±0.0068} & \underline{ 0.0947±0.0012} & 2.25±0.21 \\
        \rowcolor{snspurple!10}        & Attention & 0.4212±0.0017 & 0.4221±0.0097 & 0.2260±0.0056 & 0.2670±0.0026 & 0.3554±0.0147 & 0.1769±0.0153 & 0.1346±0.0075 & 2.68±0.02 \\
        \midrule
        \rowcolor{snspurple!10} PaiNN & Mean & 0.4591±0.0024 & 0.4425±0.0064 & 0.2360±0.0032 & 0.2877±0.0252 & 0.3950±0.0233 & 0.1817±0.0091 & 0.1472±0.0039 & 1.87±0.16 \\
        \rowcolor{snspurple!10}       & DeepSet & 0.4471±0.0071 & 0.4269±0.0033 & 0.2294±0.0065 & \underline{0.2225±0.0218} & 0.3619±0.0192 & 0.1693±0.0111 & 0.1324±0.0091 & 2.20±0.05 \\
        \rowcolor{snspurple!10}       & Attention & 0.4641±0.0016 & 0.4567±0.0094 & 0.2471±0.0049 & 0.3496±0.0140 & 0.4109±0.0167 & 0.2123±0.0005 & 0.1506±0.0029 & 2.23±0.12 \\
        \midrule
        \rowcolor{snspurple!10} ClofNet & Mean & 0.4536±0.0030 & 0.4301±0.0007 & 0.2365±0.0075 & 0.3555±0.0193 & 0.4485±0.0053 & 0.2473±0.0076 & 0.2022±0.0212 & 2.01±0.08 \\
        \rowcolor{snspurple!10}          & DeepSet & 0.4280±0.0056 & 0.4033±0.0024 & 0.2199±0.0073 & 0.3228±0.0020 & 0.4742±0.0161 & 0.2263±0.0249 & 0.1548±0.0039 & 2.35±0.04 \\
        \rowcolor{snspurple!10}          & Attention & 0.4330±0.0071 & 0.4107±0.0048 & 0.2220±0.0084 & 0.3734±0.0267 & 0.4963±0.0286 & 0.2178±0.0186 & 0.1690±0.0281 & 2.66±0.14 \\
        \midrule
        \rowcolor{snspurple!10} LEFTNet & Mean & 0.4402±0.0062 & 0.4267±0.0026 & 0.2183±0.0007 & 0.2949±0.0001 & 0.3643±0.0352 & 0.2098±0.0146 & 0.1386±0.0007 & 2.04±0.00 \\
        \rowcolor{snspurple!10}         & DeepSet & 0.4167±0.0043 & 0.3953±0.0000 & 0.2069±0.0022 & 0.2644±0.0130 & 0.3866±0.0270 & 0.2023±0.0026 & 0.1441±0.0042 & 2.51±0.30 \\
        \rowcolor{snspurple!10}         & Attention & 0.4229±0.0059 & 0.4067±0.0047 & 0.2198±0.0011 & 0.3161±0.0116 & 0.4324±0.0292 & 0.2017±0.0023 & 0.1508±0.0075 & 2.63±0.15 \\
        \midrule
        \bottomrule
    \end{tabular}%
    }

  \label{mean}
\end{table}
\paragraph{Time measurements}
To compare our running time with other models, we measured the execution time for each model on the 12-chain task with all models using 128 hidden feature size. The results \ref{tab:time_com} show that our running time is on par with the competitors.

\paragraph{Norm Accuracies}
To heuristically estimate  ‘how generic’ the molecules in the chemical property datasets \citep{Drugs,Kraken,BDE} we considered are,  we computed all norms of atoms in each (centralized) molecule and counted the number of unique norms relative to the total number of points. We considered a norm unique if its distance from all other norms exceeded a specified threshold $\mu$, and considered various threshold values. The results \ref{tab:norms_repeat} indicate that the norms exhibit considerable variation, suggesting that the molecules are close to being generic. This supports the conjecture stated in the main text that the surprising success of our simple method, compared to more expressive methods, is related to the near-genericity of the molecules in these datasets.

% Required packages:
% \usepackage{booktabs}
% \usepackage{siunitx}

\begin{table}[t]
    \centering
    \caption{
        Runtime comparison on the 12-chain problem.
        Lower values indicate faster execution.
    }
    \label{tab:time_com}

    \setlength{\tabcolsep}{12pt}
    \renewcommand{\arraystretch}{1.05}

    \begin{tabular}{@{}l S[table-format=3.1]@{}}
        \toprule
        \textbf{Method} & {\textbf{Time (s)}} \\
        \midrule
        SchNet \cite{SchNet}             &  53.3 \\
        DimeNet \cite{DimeNet}           & 222.6 \\
        SphereNet \cite{SphereNet}       & 360.9 \\
        EGNN \cite{egnn}                 &  98.8 \\
        GVP \cite{gvp-egnn}              & 222.4 \\
        TFN \cite{TFN}                   & 268.1 \\
        MACE \cite{mace}                 & 440.6 \\
        \addlinespace[2pt]
        \midrule
        \textbf{\(\us\) (Ours)}          & \bfseries 112.0 \\
        \bottomrule
    \end{tabular}
\end{table}

\begin{table}[t]
    \centering
    \caption{
        Fraction of repeated norms at different thresholds across datasets.
    }
    \label{tab:norms_repeat}

    \setlength{\tabcolsep}{10pt}
    \renewcommand{\arraystretch}{1.08}

    \begin{tabular}{
        @{}
        S[table-format=1.0e-1]
        S[table-format=1.4]
        S[table-format=1.4]
        S[table-format=1.4]
        @{}
    }
        \toprule
        {\textbf{Threshold}}
        & {\textbf{Drugs}}
        & {\textbf{Kraken}}
        & {\textbf{BDE}} \\
        \midrule
        0       & 1.0000 & 1.0000 & 1.0000 \\
        1e-4    & 0.9967 & 0.9804 & 0.9950 \\
        1e-3    & 0.9678 & 0.9139 & 0.9573 \\
        1e-2    & 0.7230 & 0.5217 & 0.6698 \\
        \bottomrule
    \end{tabular}
\end{table}

\end{document}